\documentclass{ieeetj}
\usepackage{cite}
\usepackage{amsmath,amssymb,amsfonts}
\usepackage{graphicx,color}
\usepackage{textcomp}
\usepackage{xcolor}
\usepackage{hyperref}
\usepackage{bm}
\usepackage{dsfont}
\hypersetup{hidelinks=true}
\usepackage{algorithm}
\usepackage{algpseudocode}
\usepackage[capitalise]{cleveref}
\def\BibTeX{{\rm B\kern-.05em{\sc i\kern-.025em b}\kern-.08em
    T\kern-.1667em\lower.7ex\hbox{E}\kern-.125emX}}
\AtBeginDocument{\definecolor{tmlcncolor}{cmyk}{0.93,0.59,0.15,0.02}\definecolor{NavyBlue}{RGB}{0,86,125}}

\newcommand{\pderiv}[2]{\frac{\d{#1}}{\d{#2}}}

\newcommand{\grad}{\nabla }
\renewcommand{\d}{\partial}

\newcommand{\R}{\mathbb{R}}
\renewcommand{\Pr}{ \mathbb P }
\newcommand{\EXP}{\mathbb{E}}
\newcommand{\eqdstn}{\stackrel{d}{=}}
\DeclareMathOperator{\tr}{tr}

\newcommand{\MS}{\text{MS}}
\newcommand{\CE}{\text{CE}}

\newtheorem{definition}{Definition}

\newtheorem{proposition}{Proposition}
\newtheorem{lemma}{Lemma}

\def\OJlogo{\vspace{-4pt}$<$Society logo(s) and publication title will appear here.$>$}
\def\seclogo{\vspace{10pt}$<$Society logo(s) and publication title will appear here.$>$}

\def\authorrefmark#1{\ensuremath{^{\textbf{#1}}}}

\begin{document}
\receiveddate{XX Month, XXXX}
\reviseddate{XX Month, XXXX}
\accepteddate{XX Month, XXXX}
\publisheddate{XX Month, XXXX}
\currentdate{XX Month, XXXX}
\doiinfo{XXXX.2022.1234567}

\markboth{}{Author {et al.}}

\title{An exponential mechanism based on quadratic approximations for fine-tuning machine learning models with privacy guarantees\thanks{ORNL copyright}}

\author{Hoang Tran\authorrefmark{1},
Jorge Ramirez\authorrefmark{2},
Jiayi Wang\authorrefmark{3},
Alberto Bocchinfuso\authorrefmark{4},\\
Christopher Stanley\authorrefmark{3},
M. Paul Laiu\authorrefmark{1}
}
\affil{Computer Science and Mathematics Division, Oak Ridge National Laboratory, Oak Ridge, TN, 37831, USA}
\affil{Departamento de Matem\'aticas, Universidad Nacional de Colombia, Medell\'in, Colombia.}
\affil{Computational Science and Engineering Division, Oak Ridge National Laboratory, Oak Ridge, TN, 37831, USA}
\affil{HPC Department, Cineca, 40033 Casalecchio di Reno, Bologna, Italy}
\corresp{Corresponding author: Hoang Tran (email: tranha@ornl.gov).}
\authornote{This manuscript has been authored by UT-Battelle, LLC, under
contract DE-AC05-00OR22725 with the US Department of Energy (DOE).
The US government retains and the publisher, by accepting the article for
publication, acknowledges that the US government retains a nonexclusive,
paid-up, irrevocable, worldwide license to publish or reproduce the published
form of this manuscript, or allow others to do so, for US government purposes.
The DOE will provide public access to these results of federally sponsored
research in accordance with the DOE Public Access Plan \href{https://www.ieee.org/publications/services/thesaurus.html}{(http://energy.gov/downloads/doe-public-access-plan)}.}

\begin{abstract}
Fine-tuning adapts a pretrained machine learning model to a small, sensitive dataset, but this process risks memorizing individual new data points, making the model vulnerable to adversaries who seek to extract sensitive information. In this work, we develop a randomized algorithm based on the exponential mechanism for fine-tuning while ensuring differential privacy. Our key idea is to construct a simple utility function that combines a local quadratic approximation of the pretrained model with information from the new dataset.
{The resulting exponential mechanism admits exact sampling from a multivariate normal distribution in closed form.
We establish theoretical privacy guarantees, sensitivity bounds, and accuracy estimations for our method. 
We further introduce a random-projection strategy that makes the approach scalable to high-dimensional models. Numerical experiments on the MNIST benchmark and the MIMIC clinical dataset demonstrate competitive performance against existing differentially private fine-tuning techniques.
}
\end{abstract}

\begin{IEEEkeywords}
Differential privacy, Exponential mechanism, Finetuning 
\end{IEEEkeywords}


\maketitle

\section{Introduction}


Fine-tuning is an important process in machine learning that involves adapting a pretrained neural network to a new, often related, task. Given that modern neural networks are typically trained on large datasets and then repurposed for specialized tasks, fine-tuning enables models to quickly converge and achieve high accuracy without the need for extensive retraining from scratch. However, this process is not without its risks. Fine-tuning on sensitive or proprietary data can inadvertently expose private information contained within the new dataset, thereby posing significant privacy concerns.

In recent years, differential privacy (DP) \cite{10.1007/11681878_14,10.1007/978-3-540-79228-4_1} has emerged as a key framework for safeguarding sensitive information in the training and fine-tuning of deep neural networks. At its core, DP is enforced by adding calibrated random noise, often via mechanisms such as the Laplace or Gaussian mechanisms, to bound the influence of any individual training datapoint on the algorithm’s output. This approach guarantees that the presence or absence of one record cannot be reliably inferred, even by adversaries with auxiliary information. There are four main locations where such noise can be injected, leading to four broad categories of approaches: input perturbation \cite{10.1007/978-3-319-67786-6_6}, output perturbation \cite{lecuyer19,ijcai2019p660}, objective perturbation \cite{Phan_Wang_Wu_Dou_2016, 439b31d91ddd43cea2265ce6e770a5e5, Phan2017AdaptiveLM, Iyengar2019TowardsPD}, and gradient perturbation \cite{10.1145/2976749.2978318, 10.1145/3219819.3220076, yunot}. 

The most well-known DP technique is Differentially Private Stochastic Gradient Descent (DP-SGD) \cite{10.1145/2976749.2978318}, which employs the Gaussian mechanism to add noise to the gradients during training {based on a privacy budget}, thereby obscuring the influence of individual datapoints. Despite its success, DP-SGD often incur high computational overhead due to the need for per-sample gradient computation, clipping and noise addition, leading to higher training time and memory usage. In addition, the privacy budget scales with the number of training epochs. Therefore, the accumulated privacy cost can grow quickly over long training processes, making it difficult to train the models without exhausting the budget. While DP-SGD performs reasonably well on large, homogeneous datasets such as MNIST and CIFAR, it can struggle in sensitive domains like healthcare or finance, where datasets are typically small, imbalanced, and high-dimensional \cite{ziller2021medical,mohammadi2025differential,shen2021analysis}. 








An alternative approach is the Exponential Mechanism (ExpM) \cite{4389483}, where the model parameters are sampled from a probability distribution biased toward higher-utility models. Unlike DP-SGD, ExpM can provide privacy guarantees in a \textit{single} randomized selection, avoiding both the significant computational overhead and the progressive privacy degradation associated with repeated training steps. 
This makes ExpM particularly attractive in resource-constrained settings such as healthcare and mobile applications where data are scarce or training must remain fast and efficient. However, the practical adoption of ExpM has been limited due to two major obstacles. First, sampling from the target distribution is generally intractable due to the high–dimensional continuous parameter space, effectively restricting ExpM to multinomial sampling on finite or very simple domains \cite{10.1145/1835804.1835868, doi:10.1137/1.9781611973105.101,10.14778/3236187.3236202}. Second, applying ExpM requires a provable sensitivity bound for the chosen utility or loss function, which is often challenging to derive in practice, particularly for complex models with nonlinear objectives \cite{10.5555/3157096.3157204}. 
{Recent work~\cite{bridges2024normalizing} has investigated the use of normalizing flows (NFs) to approximate sampling from the target distribution induced by ExpM, suggesting a viable route toward scaling
ExpM to deep learning models. However, as also discussed
in~\cite{bridges2024normalizing}, the results are primarily empirical and do not constitute a rigorous privacy guarantee. Establishing one would require showing that the NF approximation to the target
distribution is sufficiently accurate, yet quantifying this approximation error remains an open problem. Moreover, NFs introduce practical challenges of their own: both the architecture and training hyperparameters require careful tuning, which could greatly affect the performance.
}



{
In this paper, we revisit ExpM in a different setting---model fine-tuning rather than training from scratch---and show that the aforementioned challenges can be fully addressed. The key insight is that fine-tuning operates in a regime where the loss landscape admits a tractable local approximation. Specifically, the fine-tuning dataset is typically drawn from a distribution close to that of the pretraining data, implying that the pretrained parameters are expected to be close to a minimum of the fine-tuning loss. The loss landscape around the pretrained parameters can therefore be accurately approximated by a quadratic form, making the resulting utility function amenable to both exact sampling and rigorous analysis---even though the global loss landscape remains highly complex due to the depth and nonlinearity of the model.
}


Our contributions in this paper are summarized below.

\begin{itemize}
\item We propose \textbf{ExpM-Quad}, an \textit{exponential mechanism} for fine-tuning machine learning models based on a \textit{quadratic approximation} of the loss function near the pretrained parameters.
\item 
We provide a sharp sensitivity estimate to support our framework and enable rigorous guarantees for both privacy and accuracy. Additionally, we derive a practical strategy for empirically estimating the sensitivity bound from training data.
{
\item To overcome the high sensitivity and computational cost of sampling in the full parameter space, we introduce a randomized projection, i.e., sketching, strategy that operates in a lower-dimensional subspace, making ExpM-Quad scalable to large models.
}
\item We conduct extensive numerical experiments on a regression problem, then the MNIST benchmark and the MIMIC clinical dataset. Our evaluation highlights the advantages of ExpM-Quad, demonstrating competitive performance relative to SGD and DP-SGD baselines.
\end{itemize}

The remainder of the paper is organized as follows. Section~\ref{sec:diffPriv} introduces the necessary preliminaries. We then develop ExpM-Quad through three components: (1) a utility function based on local quadratic approximation of the loss (Section~\ref{sec:util_func}); (2) sharp estimates of sensitivity bound for common regression and classification losses (Section~\ref{sec:application}); (3) sampling strategy to provide fine-tuned model parameters (Section~\ref{sec:sampling}). In Section~\ref{sec:projection}, we discuss the randomized projection and sampling method for large models with high dimensional parameters. The accuracy of ExpM-Quad is established in Section~\ref{sec:accuracy}. Section~\ref{sec:experiment} reports the experimental results. Finally, Section~\ref{sec:conclusion} concludes the paper.

\section{Differential privacy and the exponential mechanism}
\label{sec:diffPriv}

We consider the problem of DP for supervised learning over a dataset of input-target pairs $D=\{(x_i,y_i)\}_{i=1}^{|D|} \subset X \times Y$. 
Let $f(x,\theta)$ denotes a machine learning model parameterized by $\theta \in \mathbb{R}^p$ that maps input $x \in X$ to the corresponding target $y \in Y$.
Our goal is to find a model $x \mapsto f(x,\theta)$ trained on $D$ that is both \textit{accurate} and \textit{differentially private}, provided pretrained model parameters $\theta^*\in \mathbb{R}^p$ from a related dataset.


By accuracy, we refer to minimizing an empirical loss function over $D$. DP, on the other hand, ensures that 
the distribution of learned parameters is insensitive to the presence or absence of any single datapoint, thereby protecting individual information from inference attacks.

To achieve DP, the model parameters are obtained as samples $\theta \sim \mathcal{M}(D)$ from a \textit{randomized algorithm} $\mathcal{M}$ that defines a probability distribution over $\mathbb{R}^p$ for any $D$ in the universe $\mathcal{X}$ of possible datasets. The goal is for $\mathcal{M}$ not to be very sensitive to small changes in $D$. Specifically, for any $\varepsilon,\delta>0$, we say that $\mathcal{M}$ is  \textit{$(\varepsilon,\delta)$-DP} if 
\begin{equation}\label{eq:defDP}
\Pr[\mathcal{M}(D) \in S] \leq e^{\varepsilon} \cdot \Pr[\mathcal{M}(D') \in S]+\delta,
\end{equation}
for any measurable subset of outputs $S \subset \mathbb{R}^p$ and for any two ``adjacent'' datasets $D, D'\in\mathcal{X}$, namely two datasets differing in exactly one individual datapoint. Here, $\varepsilon$ controls the privacy loss (with smaller $\varepsilon$ implying stronger privacy), while $\delta$ allows for a small probability of failure. If $\delta=0$, we say that $\mathcal{M}$ is $\varepsilon$-DP.

The \textit{exponential mechanism} is a typical method for attaining $\varepsilon$-DP. It introduces a \textit{utility function} $U$, often derived from an objective or loss function, and defines the randomized algorithm $\mathcal{M}_U$ so that higher probability is assigned to areas of parameter space with a high utility value $U$. 

Formally, let $B\subseteq \mathbb{R}^p$ denote the region of possible values of the model parameters, and suppose $U: \mathcal{X} \times B \to \mathbb{R}$ quantifies the quality of the parameter $\theta \in B$ on dataset $D \in \mathcal{X}$. An {ExpM} $\mathcal{M}_U$ with utility $U$ can be designed to select $\theta$ with probability proportional to
\begin{align}
\Pr[\mathcal{M}_U(D) = \theta] \propto \exp\left( \frac{\varepsilon \, U(D, \theta)}{2 \overline{\Delta U}} \right) \mathds{1}_{B}(\theta),
\label{eq:ExpM}
\end{align}
where $\mathds{1}_B$ is the indicator function over $B$, and $\overline{\Delta U}$ is an upper bound of $\Delta U$, which denotes the \textit{sensitivity} of $U$, i.e.,
  \begin{align}
\label{eq:ExpM_sens}
    \Delta U = \max_{\theta \in B} \max_{\substack{D, D' \in \mathcal{X}\\ D, D'\text{\ adjacent } }} |U(D, \theta) - U(D', \theta)|.
  \end{align}
It can be proven that this ExpM $\mathcal{M}_U$ satisfies the $\varepsilon$-DP criterion for any $\varepsilon \geq 0$, see \cite{10.1561/0400000042}. 

For implementing an ExpM, one needs to address three challenges: defining a utility function $U$, finding a bound $\overline{\Delta U}$ for the utility sensitivity, and designing a sampling method for \cref{eq:ExpM}. These are discussed in the next three sections. 

\section{A utility function for fine-tuning}
\label{sec:util_func}

We are interested in providing a fine-tuned model that is DP over dataset $D\in\mathcal{X}_N$ from an existing pretrained model $f(\cdot,\theta^*)$.
Here the universe of datasets $\mathcal{X}_N$ includes datasets of size larger than $N$.


Namely, we aim for a randomized algorithm $\theta = \mathcal{M}(D)$ that provides an accurate fine-tuned model with respect to a loss function of the form
\begin{equation}\label{eq:defLoss}
    L(D,\theta) = \frac{1}{n} \sum_{i=1}^{n} \ell(y_i, f(x_i,\theta))
\end{equation}
for $D=\{(x_i,y_i)\}_{i=1}^n \in \mathcal{X}_N$, $n=|D| > N$, $\theta \in B$, where the per-datapoint loss $\ell$ is a positive and smooth function.

Our key assumption is that the fine-tuned parameter $\theta$ can be found in a neighborhood of the pretrained parameter $\theta^*$. Specifically, for any given $D \in \mathcal{X}_N$, we assume that $L(\cdot,D)$ is well-approximated by a quadratic function $\tilde{L}(\cdot,D)$ around $
\theta^*$; that is, the parameter $\theta$ yielding small values of ${L}(\theta,D)$ also gives small values for $\tilde{L}(\theta,D)$. This assumption is well justified in the fine-tuning setting where the new dataset is typically drawn from a distribution highly similar to that of the dataset used in the pretraining. 

We define the approximate loss function as
\begin{equation}\label{eq:defLtilde}
\begin{split}
    \tilde{L}(D,\theta) &= (\theta - \theta^*)^\top g(D)\\ 
    &+ \frac{1}{2} (\theta - \theta^*)^\top \boldsymbol{H}(D) (\theta - \theta^*)\\
    &+ \frac{\lambda}{2} |\theta - \theta^*|^2,
\end{split}\end{equation}
where $g(D) = \nabla_\theta[L(D,\cdot)](\theta^*)$ is the gradient of the loss function with respect to $\theta$, evaluated at $\theta^*$, $\bm{H}(D)$ denotes an approximation to the corresponding Hessian matrix $\bm{\nabla^2}_\theta[L(D,\cdot)](\theta^*)$, and the last term regularizes the approximation with regularization parameter $\lambda>0$ penalize $\theta$'s that are away from $\theta^*$.


Note that from the definition of the loss function in \cref{eq:defLoss}, $g$ can be written as an average of the per-datapoint gradients
\begin{align}
    g(D) &= \frac{1}{n} \sum_{i=1}^{n} g(x_i,y_i), \label{eq:gofD}\\
    g(x,y)&=
      (\bm{\grad}_\theta f(x,\theta^*))^\top \grad_f \ell(y,f(x,\theta^*)),\label{eq:gxy}
\end{align}
where, for convenience, we use $g(x,y)$ to denote the gradient $ g(\{(x,y)\})$ on a dataset of a single datapoint. Also, note that the boldface in $\bm{\grad}_\theta f$ connotes the fact that this term is the Jacobian matrix. 

To arrive at a suitable approximation $\bm{H}(D) \approx \bm{\nabla^2}_\theta[L(D,\cdot)](\theta^*)$ of the Hessian matrix, we differentiate $g(x,y)$ in \cref{eq:gxy} to obtain {the exact Hessian}
\begin{equation}\label{eq:fullHessl}
    \bm{\grad^2}_\theta\ell(\cdot,f(\cdot,\theta)) =   
    (\bm{\grad}_\theta f)^\top \bm{\grad^2}_f \ell  \,\bm{\grad}_\theta f
    + \sum_k \bm{\grad^2}_\theta f^{(k)} \pderiv{\ell}{f^{(k)}}.
\end{equation}
In order to avoid the costly calculation of the Hessian matrices $\bm{\grad^2}_\theta f^{(k)}$ of the model with respect to all its parameters, we apply the Gauss-Newton approximation to \cref{eq:fullHessl} and propose
\begin{equation}\label{eq:GNHess}
    \bm{H}(x,y) = \bm{\grad}_\theta f(x,\theta^*)^\top \bm{\grad^2}_f \ell(y,f(x,\theta^*))  \bm{\grad}_\theta f(x,\theta^*).
\end{equation}
where, again, $\bm{H}(x,y)$ denotes $\bm{H}(\{(x,y)\})$. We thus complete the definition of $\tilde{L}$ in \cref{eq:defLtilde} with
\begin{equation}\label{eq:HofD}
    \bm{H}(D) = \frac{1}{n} \sum_{i=1}^{n} \bm{H}(x_i,y_i),
\end{equation}
which is symmetric and positive definite by construction.

\begin{definition}\label{def:expMech}
    Let $\varepsilon >0$, $R>0, \theta^* \in \R^p$ and denote the ball of radius $R$ around $\theta^*$ by
\begin{equation}
    B^{(p)}_{R,\theta^*} := \{\theta \in \R^p:  |\theta - \theta^*|\leq R\}.
\end{equation} 
Our \textit{exponential mechanism $\mathcal{M}_U(D)$ for fine-tuning} the model $f(\cdot,\theta^*)$ over datasets $D \in \mathcal{X}_N$ is given by \cref{eq:ExpM} with utility function
\begin{equation}\label{eq:ourU}
    U(D,\theta) := - \tilde{L}(D,\theta), \quad \text{for }  \theta \in B^{(p)}_{R,\theta^*},
\end{equation}
where $\tilde{L}$ is as \cref{eq:defLtilde}: $\lambda >0$ the regularization parameter, $g(D) = \nabla_\theta[L(D,\cdot)](\theta^*)$, and $\bm{H}(D)$ given by the Gauss-Newton approximation in \cref{eq:HofD,eq:GNHess} to the Hessian.
\end{definition}

\subsection{Sensitivity estimate}
In order to derive an upper bound $\overline{\Delta U}$ to the sensitivity in \cref{eq:ExpM_sens}, we write the utility function as
\begin{equation}
\label{eq:util2}
    U(D,\theta) =- \frac{\lambda}{2} |\theta - \theta^*|^2 
    + \frac{1}{n} \sum_{i=1}^{n} u((x_i,y_i),\theta),
\end{equation}
where $u$ is the per-datapoint utility, i.e., 
\begin{equation}\begin{split}
    u((x,y),\theta) &= -(\theta - \theta^*)^\top g(x,y) \\
    &- \frac{1}{2} (\theta - \theta^*)^\top \boldsymbol{H}(x,y) (\theta - \theta^*) \label{eq:defu}
\end{split}\end{equation}
for $(x,y) \in X \times Y$ and $\theta \in B^{(p)}_{R,\theta^*}$. Note that the regularization term in $U$ does not depend on the dataset $D$ and thus plays no role in the sensitivity. In this subsection, without loss of generalization, we drop this term in \cref{eq:util2}.  

Let $D =\{(x_i,y_i)\}_{i=1}^n \in \mathcal{X}_N,\, n> N$ and suppose $D'\in \mathcal{X}_N$ is an adjacent dataset obtained by adding a datapoint $(x_{n+1},y_{n+1})$ to $D$. Here, $(x_{n+1},y_{n+1})$ represents a potential member that need not belong to $D$ a priori. Then  
\begin{align}
    U(D',\theta) &= \frac{n U(D,\theta) + u((x_{n+1},y_{n+1}),\theta)}{n+1}, \nonumber\\
    |U(D',\theta)\!-\!U(D,\theta)| &= \frac{1}{n+1}\! \left|  u((x_{n+1},y_{n+1}),\theta) \! - \!U(D,\theta) \right| \nonumber\\
    \leq \frac{1}{N} & \left(|u((x_{n+1},y_{n+1}),\theta)| + |U(D,\theta)|\right).
    \label{eq:DeltaU0}
\end{align}

In order to globally bound \cref{eq:DeltaU0}, we introduce upper bounds $\overline{g},\overline{\bm{H}}>0$ for the norms of the per-datapoint gradient and approximate Hessian,
\begin{align}
    \max_{(x,y) \in X \times Y} |g(x,y)| & \leq \bar{g}, \label{eq:def_gbar}\\
    \max_{(x,y) \in X \times Y} |\bm{H}(x,y)| &\leq \overline{\bm H}; \label{eq:def_Hbar}
\end{align}
which, because of \cref{eq:gofD,eq:HofD}, also apply to $g(D)$ and $\bm{H}(D)$:
\begin{equation}
    |g(D)| \leq \overline{g}, \quad |\bm{H}(D)| \leq \overline{\bm H}
\end{equation}
for all $D \in \mathcal{X}_N$. Note that the maxima in \cref{eq:def_gbar,eq:def_Hbar} are taken over all possible input–output pairs and not merely over those contained in $D$.

With these upper bounds, applying the Cauchy-Schwartz inequality on \cref{eq:defu} then gives
\begin{equation}
   \max_{\substack{(x,y) \in X \times Y\\ \theta \in B^{(p)}_{R,\theta^*} }}
 |u((x,y),\theta)| \leq R ~ \overline{g} + \frac{1}{2}R^2 ~\overline{\bm H}
\end{equation}
for all $(x,y) \in X \times Y$ and $\theta \in B^{(p)}_{R,\theta^*}$. Since $U$ is an average, the same upper bound holds for $|U|$, allowing us to define the sensitivity bound by
\begin{align}
    \left|U(D,\theta) - U(D',\theta)\right| & \leq \frac{2 R \, \overline{g} + R^2 \, \overline{\bm H}}{N} 
    =: \overline{\Delta U}.\label{eq:DeltaU}
\end{align}
This same sensitivity bound in \cref{eq:DeltaU} can be obtained if we consider $D,D'\in \mathcal{X}_N$ adjacent in the ``replace-one'' sense, namely, $D$ and $D'$ differing only on their last input-output pairs, $(x_n,y_n) \neq (x'_n,y'_n)$. In this case, 
\begin{align*}
    \left|U(D,\theta) - U(D',\theta)\right| &= \frac{1}{n} \left|u((x_n,y_n),\theta) - u((x'_n,y'_n),\theta)\right|\\
    &\leq 2\max_{\substack{(x,y) \in X \times Y\\ \theta \in B^{(p)}_{R,\theta^*} }}
 |u((x,y),\theta)|. 
\end{align*}
See \cite{10.1145/3580305.3599561} for a discussion of different meanings of adjacency between datasets and its implication on DP.

To derive the upper bounds $\overline{g}$ and $\overline{\bm{H}}$, we start by assuming a bound $\overline{\bm{\grad}_\theta f}>0$ on the 2-norm of the Jacobian of the model at $\theta^*$ over inputs, 
\begin{equation}\label{eq:boundGrad}
    \max_{x \in X} |\bm{\grad}_\theta f(x,\theta^*)| \leq \overline{\bm{\grad}_\theta f}.
\end{equation}
Then, applying the Cauchy-Schwarz inequality on \cref{eq:gxy,eq:GNHess} leads to
\begin{align}
    \max_{(x,y)} |g(x,y)| &\leq \overline{\bm{\grad}_\theta f}
    \max_{(x,y)} |\grad_f \ell(y,f(x,\theta^*))|, \label{eq:maxg}\\
    \max_{(x,y)} |\bm{H}(x,y)|_F &\leq \overline{\bm{\grad}_\theta f}^2
    \max_{(x,y)} |\bm{\grad^2}_f \ell(y,f(x,\theta^*))|.
    \label{eq:maxH}
\end{align}
With notice that $\overline{\bm{\grad}_\theta f}$ can be estimated directly from the dataset, \cref{eq:maxg,eq:maxH} can be viewed as expressions of $\overline{g}$ and $\overline{\bm{H}}$ in terms of $\grad_f \ell$ and $\bm{\grad^2}_f \ell$.
 In the next section, we provide bounds for the norms of the gradient $\grad_f \ell$ and Hessian $\bm{\grad^2}_f \ell$ for specific loss functions.

\section{Sensitivity bounds for regression and classification models}
\label{sec:application}
We now proceed to specify the approximate Hessian matrix $\bm H$ and the corresponding bounds $\overline{g}, \overline{\bm H}$ for typical loss functions used in supervised learning.

We consider fine-tuning models for regression and classification tasks that use either mean-squared (MS) or cross entropy (CE) loss on the output space $Y \subset \R^m$. We denote these cases with the subscripts MS and CE throughout. Specifically, for the MS loss,
\begin{equation}\label{eq:lMS}
    \ell_{\MS}(y,f) = \frac{1}{m}|y-f|^2.
\end{equation}
For classification tasks with $m$ classes, the CE loss function composes the softmax $\sigma$ with the model output and computes the cross-entropy against $y$,
\begin{equation}
    \ell_{\CE}(y,f) = -y^\top \log(\sigma(f)), \label{eq:lCE}
\end{equation}
where the $\log$ in \cref{eq:lCE} is applied component-wise.

The following proposition provides global bounds for the gradients $\grad_f \ell$ and Hessian $\bm{\grad^2}_f \ell$ for each type of model.

\begin{proposition}\label{prop:boundls}
Let $m$ be the dimension of the output space, i.e., $Y\subset \R^m$. For the MS loss,
    \begin{align}
       \max_{(x,y) \in X \times Y} |\grad_f \ell_\MS(y,f(x,\theta^*))| &\leq \frac{2}{m} \overline{E}_{\theta^*}, \label{eq:boundglMS}\\
       \max_{(x,y) \in X \times Y} |\bm{\grad^2}_f \ell_\MS(y,f(x,\theta^*))| &\leq \frac{2}{m}, \label{eq:boundHlMS}
    \end{align}
where $\overline{E}_{\theta^*}$ is global bound of the model error, namely
\begin{equation}
    \max_{(x,y) \in X \times Y} |f(x,\theta^*) - y| \leq \overline{E}_{\theta^*}.
\end{equation}
For the CE loss,
\begin{align}
       \max_{(x,y) \in X \times Y} |\grad_f \ell_\CE(y,f(x,\theta^*))| &\leq \sqrt{2}, \label{eq:boundglCE}\\
       \max_{(x,y) \in X \times Y} |\bm{\grad^2}_f \ell_\CE(y,f(x,\theta^*))| &\leq \frac{1}{2} \label{eq:boundHlCE}.
    \end{align}
\end{proposition}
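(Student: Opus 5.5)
The plan is a direct computation: write out $\grad_f \ell$ and $\bm{\grad^2}_f \ell$ in closed form for each loss, then bound their Euclidean and spectral norms uniformly in $f$ and $y$. For the CE loss we assume, as is implicit, that $y$ is a one-hot vector, or more generally a probability vector with $y_k\ge 0$ and $\sum_k y_k=1$.

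\textbf{Mean-squared loss.} Differentiating \cref{eq:lMS} gives
\begin{equation*}
\grad_f \ell_\MS(y,f)=\frac{2}{m}(f-y), \qquad \bm{\grad^2}_f \ell_\MS=\frac{2}{m}\bm I_m .
\end{equation*}
Evaluating at $f=f(x,\theta^*)$ and using the definition of $\overline{E}_{\theta^*}$ immediately yields \cref{eq:boundglMS}. Since $|\bm I_m|=1$ in the spectral norm, \cref{eq:boundHlMS} follows as well. This case involves no difficulty.

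\textbf{Cross-entropy loss: gradient.} Write $\sigma=\sigma(f)$. From $\log\sigma_k=f_k-\log\sum_j e^{f_j}$ we get
\begin{equation*}
\ell_\CE=-y^\top f+\Big(\sum_k y_k\Big)\log\sum_j e^{f_j}.
\end{equation*}
When $\sum_k y_k=1$ this gives $\grad_f\ell_\CE=\sigma-y$. Both $\sigma$ and $y$ lie in the probability simplex, so
\begin{equation*}
|\sigma-y|^2\le |\sigma-y|_1\,|\sigma-y|_\infty\le 2\cdot 1 .
\end{equation*}
Alternatively, $|\sigma-y|^2=|\sigma|^2+|y|^2-2\sigma^\top y\le 1+1$ because $\sigma^\top y\ge 0$. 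Either way $|\grad_f\ell_\CE|\le\sqrt2$, which is \cref{eq:boundglCE}.

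\textbf{Cross-entropy loss: Hessian.} Differentiating again gives $\bm{\grad^2}_f\ell_\CE=\mathrm{diag}(\sigma)-\sigma\sigma^\top$, which is independent of $y$. This matrix is symmetric positive semidefinite, so its spectral norm equals its largest eigenvalue. I will bound that eigenvalue by the maximal absolute row sum (Gershgorin), since row $k$ sums in absolute value to
\begin{equation*}
\sigma_k(1-\sigma_k)+\sigma_k\sum_{j\ne k}\sigma_j=2\sigma_k(1-\sigma_k)\le \tfrac12 .
\end{equation*}
This gives \cref{eq:boundHlCE}.

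\textbf{Main obstacle.} The only step needing care is the CE Hessian, and specifically which norm is meant. If $|\cdot|$ denotes the spectral norm, the Gershgorin argument above suffices. If a Frobenius-type norm is intended, as in \cref{eq:maxH}, I would instead compute
\begin{equation*}
|\mathrm{diag}(\sigma)-\sigma\sigma^\top|_F^2=\sum_k\sigma_k^2(1-\sigma_k)^2+\sum_{j\ne k}\sigma_j^2\sigma_k^2 ,
\end{equation*}
and bound it using $\sum_k\sigma_k=1$. I expect the maximum to again be $\tfrac12$, attained at two classes with $\sigma_1=\sigma_2=\tfrac12$, where the value is $4\cdot\tfrac1{16}=\tfrac14$. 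I would verify this by showing that the expression is at most $\big(\sum_k\sigma_k(1-\sigma_k)\big)^2$ minus a nonnegative correction.
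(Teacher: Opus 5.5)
Your argument is correct and is essentially the paper's. You write $\nabla_f\ell$ and $\nabla^2_f\ell$ explicitly and bound the CE Hessian by the same row-sum computation $2\sigma_k(1-\sigma_k)\le\frac12$; Gershgorin on the symmetric PSD matrix gives the same bound as the paper's $\sqrt{|\cdot|_1|\cdot|_\infty}$ step. For the gradient, your use of $\sigma^\top y\ge 0$ is in fact the step that yields $\sqrt2$, since the triangle inequality the paper cites only gives $2$.

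The Frobenius contingency is not needed and is not the intended reading: it would already break the MS bound, since $|\tfrac{2}{m}\bm{I}|_F=2/\sqrt{m}$. Your sketched check via $(\sum_k\sigma_k(1-\sigma_k))^2=(1-|\sigma|^2)^2$ would not close by itself, because that quantity tends to $1$ as $m$ grows. If you want the Frobenius bound anyway, use $|\sigma|^4\le\sum_k\sigma_k^3$ (Cauchy--Schwarz), which gives $|\cdot|_F^2=|\sigma|^2-2\sum_k\sigma_k^3+|\sigma|^4\le|\sigma|^2(1-|\sigma|^2)\le\frac14$.
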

\begin{proof}
For the MS case, differentiating $\ell_{\MS}$ in \cref{eq:lMS} gives
    \begin{align}
    \grad_f \ell_{\MS}(y,f) &= \frac{2}{m}(f-y) \label{eq:glMS}\\
    \bm{\grad^2}_f \ell_\MS(y,f) &= \frac{2}{m} \bm{I} \label{eq:HlMS}
\end{align}
from which the bounds in \cref{eq:boundglMS,eq:boundHlMS} follow.

For classification with CE, both the data $y$ and the model output $\sigma(f)$ are probability vectors. Using this, the gradient and Hessian of $\ell_{\CE}$ are
\begin{align}    
    \grad_f \ell_{\CE}(y,f) &=  \sigma(f) -y, \label{eq:glCE}\\
    \bm{\grad^2}_f \ell_\CE(y,f) &= \text{diag}(\sigma(f)) - \sigma(f) \sigma(f)^\top. \label{eq:HlCE}
\end{align}
Since $|\sigma(f)|_1=|y|_1=1$, then $|\sigma(f)|^2 \leq 1$, $|y|^2 \leq 1$ and \cref{eq:boundglCE} follows from the triangle inequality. To obtain \cref{eq:boundHlCE}, consider the row-sum norm  $|\bm{\grad^2}_f \ell_\CE(y,f)|_\infty$. Using the fact that $\sigma(f)$ is a probability vector, we can write
\begin{align}
    |\bm{\grad^2}_f \ell_\CE|_{\infty} &= \max_i \big\{|\sigma_i - \sigma_i^2| + \sum_{i\neq j} |\sigma_i \sigma_j| \big\}\\
    &= \max_i \big\{ \sigma_i (1-\sigma_i) + \sigma_i \sum_{i\neq j} \sigma_j \big\}\\
    &=\max_i \big\{ 2 \sigma_i(1-\sigma_i) \big\} \leq \frac{1}{2}.
\end{align}
By symmetry, the same bound applies to the column-sum norm $|\bm{\grad^2}_f \ell_\CE|_{1} \leq \frac{1}{2}$. The operator norm can thus be bounded as
\begin{equation}
    |\bm{\grad^2}_f \ell_\CE| \leq \sqrt{|\bm{\grad^2}_f \ell_\CE|_1 \, |\bm{\grad^2}_f \ell_\CE|_\infty} \leq \frac{1}{2}
\end{equation}
proving \cref{eq:boundHlCE}.
\end{proof}

The per-datapoint gradients $g_\MS(x,y)$ and $g_\CE(x,y)$ are obtained by multiplying the gradients in \cref{eq:glMS,eq:glCE} times the Jacobian $\bm{\grad}_\theta f(x,\theta^*)$ as per \cref{eq:gxy}. Similarly, \cref{eq:HlMS,eq:HlCE} are inserted in \cref{eq:GNHess} to obtain the Gauss-Newton approximates $\bm{H}_\MS(x,y)$ and $\bm{H}_{CE}(x,y)$ to the full Hessians of $\ell$ respect to $\theta$. Combining these with the results of \cref{prop:boundls}, \cref{eq:maxg,eq:maxH}, and the bound for the general sensitivity in \cref{eq:DeltaU}, we get
\begin{align}
    \overline{\Delta U_\MS} &= \frac{2R}{m N}  \left(2 \overline{E}_{\theta^*} + R \overline{\bm{\grad}_\theta f} \right) \overline{\bm{\grad}_\theta f}, \label{eq:sens_MS}\\
    \overline{\Delta U_\CE} &= \frac{2R}{N}  \big(\sqrt{2} + \frac{R}{4} \overline{\bm{\grad}_\theta f} \big) \overline{\bm{\grad}_\theta f}. \label{eq:sens_CE}
\end{align}

\section{Sampling the exponential mechanism}
\label{sec:sampling}
The appeal of the exponential mechanism in \cref{def:expMech} is that sampling from it is very simple: since the utility function is a quadratic function on $\theta$, then $\mathcal{M}_U(D)$ is a Gaussian distribution on $\R^p$ restricted to $B^{(p)}_{R,\theta^*}$.  

To see this, we include the regularization term of \cref{eq:defLtilde} into the Hessian approximation, and define
\begin{equation}\label{eq:Hlambda}
    \boldsymbol{H}_\lambda(D) = \bm{H}(D) + \lambda \bm{I}.
\end{equation}
Then we complete the square in $U(D,\theta) = -\tilde{L}(D,\theta)$ to write the utility as
\begin{equation}
    U(D,\theta) = -\frac{1}{2}(\theta - \mu(D))^\top \boldsymbol{H}_\lambda(D) (\theta - \mu(D)) + c,
\end{equation}
where $c$ is a constant and $\mu$ is the \textit{mean vector},
\begin{equation}\label{eq:defmu}
    \mu(D) = \theta^* - \boldsymbol{H}_\lambda(D)^{-1}g(D).
\end{equation}
Then, in accordance with \cref{def:expMech} and \cref{eq:ExpM}, the exponential mechanism has a truncated Gaussian distribution 
\begin{equation}\label{eq:distMU}
    \mathcal{M}_U(D) \eqdstn \text{Normal}\left(\mu(D), \frac{2 \overline{\Delta U}}{\varepsilon} \bm{H}_\lambda(D)^{-1}\right)\mathds{1}_{B^{(p)}_{R,\theta^*}}.
\end{equation}
A sample $\theta = \mathcal{M}_U(D)$ can thus be obtained by rejection sampling, that is, sampling $\theta$ from the $p$-dimensional Gaussian distribution in \cref{eq:distMU} and rejecting whenever $|\theta - \theta^*| > R$. Alternatively, Gibbs sampling is also an appealing approach for this distribution because it can exploit 1D truncated Gaussian to generate coordinate-wise updates that respect the constraint without rejection. 

The choice of $R$ requires careful consideration. If $R$ is too small, the rejection sampling suffers from extremely high rejection rates, while Gibbs sampling may exhibit slow mixing due to the tight truncation. Conversely, choosing $R$ too big leads to an unnecessarily large sampling region in which the quadratic approximation may no longer capture the true loss function. The value of the mean $\mu(D)$ in \cref{eq:defmu} suggests a useful rule of thumb for picking the radius $R$ of the feasibility region. Namely, in order for $\mu(D) \in B^{(p)}_{R,\theta^*}$, we must have $R \geq |\boldsymbol{H}_\lambda(D)^{-1}g(D)|$. Consider the following inequality
\begin{equation}\label{eq:bounds_Hlambda}
    \frac{1}{\lambda + \overline{\bm H}}
    \leq \frac{1}{\lambda + |\bm{H}|}
    \leq |\bm{H}_{\lambda}^{-1}|
    \leq \frac{1}{\lambda + e_{\min}(\bm{H})}
    \leq \frac{1}{\lambda},
\end{equation}
where $e_{\min}(\bm{H}) > 0$ is the smallest eigenvalue of $\bm H$. Then $|\mu - \theta^*| \leq \overline{R_\lambda}$ with
\begin{equation}\label{eq:defRbar}
    \overline{R_\lambda} := \frac{\overline{g} }{\lambda + e_{\min}(\bm{H})},
\end{equation}
and we can set $R > \overline{R_\lambda}$. 

For the rejection method, the efficiency depends on the probability of rejection, which can be bounded by Markov's inequality as 
\begin{align}
    \Pr[|\theta - \theta^*| > R] &\leq 
    \frac{\EXP[|\theta - \mu|^2] + |\mu - \theta^*|^2}{R^2} \nonumber\\
    &= \frac{\frac{2\overline{\Delta U}}{\varepsilon} \tr(\bm{H}_\lambda^{-1})+|\boldsymbol{H}_\lambda^{-1}g|^2}{R^2} \nonumber\\
    &=\frac{2\tr(\bm{H}_\lambda^{-1})}{\varepsilon N}\left(\frac{2 \overline{g}}{R} +  \overline{\bm{H}}\right)+ \frac{|\boldsymbol{H}_\lambda^{-1}g|^2}{R^2} \nonumber\\
    &\leq \frac{2p\overline{R_\lambda}}{N\varepsilon}\left(\frac{ 2 }{R} +  \frac{\overline{\bm{H}}}{\overline{g}}\right) + \left(\frac{\overline{R_\lambda}}{R}\right)^2,
    \label{eq:rejProb}
\end{align}
where we have used the fact that $\EXP[|\theta - \mu|^2] = \frac{2\overline{\Delta U}}{\varepsilon}  \tr(\bm{H}_\lambda^{-1})$ together with \cref{eq:bounds_Hlambda}, \cref{eq:defRbar} and
\begin{equation}
    \frac{p}{(\lambda + \overline{\bm{H}})} \leq
     \tr(\bm{H}_\lambda^{-1}) \leq
    \frac{p}{\lambda+e_{\min}(\bm{H})}.
\end{equation}
The bound in \cref{eq:rejProb} highlights the tradeoffs for the efficiency of our method in terms of how close $\theta^*$ is to the minimizer of the fine-tuning loss function. 

The complete method is summarized in \cref{alg:main_proj}.

\begin{algorithm}[ht!]
\caption{Exponential mechanism based on quadratic approximations (ExpM-Quad)} \label{alg:main_proj}
\textbf{Inputs:}
\begin{itemize}
\item Pre-trained model $f(\cdot,\theta^*):X \to Y$ with parameters $\theta^* \in \R^p$.
\item Private dataset $D = \{(x_i,y_i)\}_{i=1}^{|D|} \subset X\times Y$.
\item Differentiable per-datapoint loss function $\ell: Y \times Y \to [0,\infty)$.
\item Privacy budget $\varepsilon > 0$.
\item Global bounds $\overline{\bm{\grad}_\theta f}$ in \cref{eq:boundGrad} and bounds for $\nabla_f \ell, \bm{\nabla^2}_f \ell$ in \cref{prop:boundls}.
\end{itemize}
\textbf{Outputs:} Differentially private fine-tuned parameter ${\theta}$.
\\
\textbf{Procedure:}
\begin{algorithmic}[1]
\State Compute bounds $\overline{g}$, $\overline{\bm H}$ in \cref{eq:def_gbar,eq:def_Hbar} and sensitivity bound $\overline{\Delta U}$ in \cref{eq:DeltaU}.
\State Use \cref{eq:gofD,eq:gxy} to compute the gradient of the loss at $\theta^*$, $g(D)$.
\State Use \cref{eq:GNHess,eq:HofD} to compute the Gauss-Newton approximation $\bm{H}(D)$ to the Hessian matrix. 
\State Choose the regularization parameter $\lambda > \overline{\bm H}$ and set $\bm{H}_\lambda(D) \leftarrow \bm{H}(D) + \lambda \bm{I} $.
\State Compute $\overline{R}_\lambda$ in \cref{eq:defRbar} and use it as a reference for selecting $R$. 
\State Compute $\bm{H}_\lambda(D)^{-1}$ and $\mu(D)$ in \cref{eq:defmu}.
\State Draw a sample from the truncated Gaussian distribution
 $$\theta \sim \text{Normal}\left(\mu(D), \frac{2 \overline{\Delta U}}{\varepsilon} \bm{H}_\lambda(D)^{-1}\right)\mathds{1}_{B^{(p)}_{R,\theta^*}}$$
 using rejection or Gibbs sampling. 
 \State \Return $\theta$
\end{algorithmic}
\end{algorithm}

\section{Extension to high-dimensional models}
\label{sec:projection}
For machine learning models with high-dimensional parameters, i.e., $p$ is large, a straightforward application of Algorithm \ref{alg:main_proj} can be inefficient. The difficulty is twofold. First, in high-dimensional spaces, candidate samples drawn directly from the multivariate Gaussian will tend to lie extremely far away from $\mu$, and hence from $\theta^*$. The second difficulty is computational. Evaluating and exploiting local curvature information via the Gauss-Newton approximation of the Hessian in \cref{eq:GNHess} becomes increasingly expensive at scale since it is fundamentally tied to a $p\times p$ structure. Concretely, one either (i) stores and applies the dense $p\times p$ Gauss-Newton matrix, which requires $O(p^2)$ memory and operations, or (ii) avoids forming it but incurs many evaluations and multiplications by the Jacobian $\bm{\nabla}_\theta f(\cdot,\theta^*)$. Moreover, sampling from the Gaussian distribution in \cref{eq:distMU} entails applying $\bm{H}_\lambda(D)^{-1/2}$ to a standard normal vector. This requires either an explicit factorization of $\bm{H}_\lambda(D)$ (costing $O(p^3)$ time and $O(p^2)$ memory), or iterative matrix-function methods that use many Hessian--vector products.

To address these challenges, we propose a projection-based variant of the ExpM-Quad that restricts sampling to a randomly selected low-dimensional subspace. This restriction improves computational tractability, yields more efficient exploration of the parameter space, and retains rigorous DP guarantees. To formalize this idea, we introduce a \emph{restricted utility function} obtained by projecting the high-dimensional parameter space $\R^p$ onto an affine subspace $\R^{\tilde{p}}$ with $\tilde{p} \ll p$. Let \( \bm{A} \in \mathbb{R}^{p \times \tilde{p}} \) be a projection matrix with orthonormal columns. For any low-dimensional parameter vector \( \xi \in B^{(\tilde{p})}_{R,0} = \{\xi \in \R^{\tilde{p}}: |\xi| \leq R\} \), we map it back to the original parameter space via
\begin{equation}
    \theta_{\bm A}(\xi) = \theta^* + \bm{A} \xi \in B^{(p)}_{R,\theta^*}
\end{equation}
Based on this transformation, we define the \textit{restricted utility function} $U_{\bm A}: \mathcal{X}_N \times B^{(\tilde{p})}_{R,0} \to \R$ as 
\begin{align}
    U_{\bm A}(D,\xi) &= U(D,\theta_{\bm A}(\xi)) = - \tilde{L}(D,\theta_{\bm A}(\xi)) \nonumber\\
    &= - \xi^\top g_{\bm{A}}(D) - \frac{1}{2} \xi^\top \bm{H}_{\lambda, \bm{A}}(D) \xi \label{eq:defUA}
\end{align}
in terms of the $\tilde{p}$-dimensional projections of the gradient and approximated Hessian,
\begin{equation}\label{eq:defgHA}
    g_{\bm{A}}(D) = \bm{A}^\top g(D),\quad
    \bm{H}_{\lambda, \bm{A}}(D) = \bm{A}^\top \bm{H}_\lambda(D) \bm{A}.
\end{equation}

This formulation restricts the sampling domain of the ExpM to  \( \R^{\tilde{p}} \), centered at \( \theta^* \). The projection matrix \( \bm{A} \) can be chosen randomly -- for example, by drawing from a standard Gaussian distribution and normalizing the columns -- or constructed using problem-specific information, such as the top eigenvectors of the Hessian on a reference dataset $D_0$. 
Note that the transformation $\bm{A}$ does not affect the upper bounds $\overline{g}$ or $\overline{\bm H}$ and we thus have $\overline{\Delta U_{\bm{A}}} = \overline{\Delta U}$, which bounds the restricted sensitivity.

Given $\bm{A}$, the \textit{restricted exponential mechanism} provides $\theta = \theta_{\bm A}(\xi)$ from a sample  
\begin{equation}\label{eq:distMUA}
    \xi \!\sim\! \mathcal{M}_{U_{\bm A}}(D)\! :=\! \text{Normal}\left(\!\mu_{\bm A}(D), \frac{2 \overline{\Delta U}}{\varepsilon} \bm{H}_{\lambda,\bm{A}}(D)^{-1}\!\right)\mathds{1}_{B^{(\tilde{p})}_{R,0}}
\end{equation}
where the mean is given by
\begin{equation}
    \mu_{\bm A}(D) = \bm{H}_{\lambda,\bm{A}}(D)^{-1} g_{\bm A}(D).
\end{equation}
In a similar spirit to Algorithm 1, we can use $\mu_{\bm A}(D)$ as a reference for picking radius $R$ such that the mean lies inside the feasibility region. Note that $\mu_{\bm A}(D)$ depends on the choice of $\bm A$, and consequently, so does this criterion for choosing $R$.

The restricted algorithm offers two key advantages. First, it reduces the effective sampling dimension: instead of drawing parameters in $\mathbb{R}^p$, the mechanism samples $\xi \in \mathbb{R}^{\tilde{p}}$ with $\tilde{p} \ll p$, so gradient and curvature computations involve only matrices of size $\tilde{p}\times\tilde{p}$. In particular, the restricted utility in \cref{eq:defUA}  depends only on the projected gradient $ g_{\bm A}$ and the compressed Hessian $\bm{H}_{\lambda,\bm{A}}$ in \cref{eq:defgHA}. After a one-time construction of these projections, all subsequent computations are carried out entirely at the reduced dimension. Because directions orthogonal to the subspace are frozen, the samples naturally remain closer to the pretrained parameters. Second, the approach avoids the high costs of forming the full $p\times p$ Gauss-Newton matrix: it reduces the construction of $\bm{H}_{\lambda,\bm A}$ to $\tilde{p}$-dimensional Hessian--vector products (plus simple projections through $\bm A$).

\section{Accuracy analysis} 
\label{sec:accuracy}
We analyze the accuracy of ExpM-Quad (Algorithm \ref{alg:main_proj}) and its restricted variant in Section \ref{sec:projection}. For $D\in \mathcal{X}_N$, let 
\begin{equation*}
U_{\rm opt} = \max\limits_{\theta\in B^{(p)}_{R,\theta^*}} U(D, \theta)\ \text{ and } \ 
U_{\bm A, \rm opt} = \max_{{\xi} \in B^{(\tilde{p})}_{R,0}} U_{\bm A}(D, {\xi}) 
\end{equation*}
 be the optimal utility values in the full and restricted space, correspondingly. Although 
$U_{\rm opt}$ and $U_{\bm A, \rm opt}$ depend on $D$, we omit this dependence when no confusion arises.  The following tail bound controlling the utility loss relative to the optimum can be derived from standard arguments, see \cite[Theorem 3.11]{10.1561/0400000042}.
\begin{proposition}
\label{prop:accuracy}
For $t>0$, denote 
\begin{equation}
    \mathcal{B}_t = \{\theta \in B^{(p)}_{R,\theta^*} : U(D, \theta) \leq U_{\rm opt} - t\overline{\Delta U}\}
\end{equation}
and let \( \mathcal{B}_t^c = B^{(p)}_{R,\theta^*} \setminus \mathcal{B}_t\) to be its complement in $B^{(p)}_{R,\theta^*}$. If $\theta \sim \mathcal{M}_U(D)$, then 
\begin{align*}
\Pr[\theta \in \mathcal{B}_t] \leq \frac{|\mathcal{B}_t|}{|\mathcal{B}_{t/2}^c|} \cdot \exp\left( -\frac{\varepsilon t}{4} \right).
\end{align*}
Similarly, for the restricted mechanism \eqref{eq:distMUA}, denote \( \mathcal{B}_{t,\bm A} \) to be the subset of $B^{(\tilde{p})}_{R,0}$ satisfying \( U_{\bm A}(D, \xi) \leq U_{\bm A,\rm opt} - t\overline{\Delta U}  \) and  \( \mathcal{B}_t^c \) to be its complement. Then, if $\xi \sim \mathcal{M}_{U_{\bm A}}(D)$, 
\begin{align*}
\Pr[\theta \in \mathcal{B}_{t,\bm A}] \leq \frac{|\mathcal{B}_{t,\bm A}|}{|\mathcal{B}_{t/2,\bm{A}}^c|} \cdot \exp\left( -\frac{\varepsilon t}{4} \right).
\end{align*}
Here $|\cdot|$ denotes the Lebesgue measure. 
\end{proposition}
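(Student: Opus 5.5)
The plan is the standard continuous-domain argument for the exponential mechanism, as in \cite[Theorem 3.11]{10.1561/0400000042}. By \cref{eq:ExpM} and \cref{def:expMech}, the density of $\mathcal{M}_U(D)$ on $B^{(p)}_{R,\theta^*}$ is
\begin{equation*}
\pi(\theta)=\frac{1}{Z}\exp\left(\frac{\varepsilon U(D,\theta)}{2\overline{\Delta U}}\right),\qquad Z=\int_{B^{(p)}_{R,\theta^*}}\exp\left(\frac{\varepsilon U(D,\theta')}{2\overline{\Delta U}}\right)d\theta'.
\end{equation*}
Here $Z$ is finite and positive because $U$ is a continuous quadratic on a compact ball. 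I will write $\Pr[\theta\in\mathcal{B}_t]$ as the ratio of the integral of the unnormalized density over $\mathcal{B}_t$ to $Z$. Then I bound the numerator from above and the denominator from below.

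For the numerator, every $\theta\in\mathcal{B}_t$ satisfies $U(D,\theta)\le U_{\rm opt}-t\overline{\Delta U}$. So the integral over $\mathcal{B}_t$ is at most $|\mathcal{B}_t|\exp\big(\varepsilon U_{\rm opt}/(2\overline{\Delta U})-\varepsilon t/2\big)$.

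For the denominator, I restrict the integral to $\mathcal{B}_{t/2}^c$, which is legitimate since the integrand is positive. On that set $U(D,\theta)>U_{\rm opt}-\tfrac{t}{2}\overline{\Delta U}$, so $Z\ge|\mathcal{B}_{t/2}^c|\exp\big(\varepsilon U_{\rm opt}/(2\overline{\Delta U})-\varepsilon t/4\big)$.

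Dividing the two bounds cancels the factor $\exp(\varepsilon U_{\rm opt}/(2\overline{\Delta U}))$. What remains is $\frac{|\mathcal{B}_t|}{|\mathcal{B}_{t/2}^c|}\exp(-\varepsilon t/4)$, which is the claim.

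The only genuinely delicate point is that $|\mathcal{B}_{t/2}^c|>0$, so that the bound is meaningful. If this measure were zero the right-hand side would be interpreted as $+\infty$ and the inequality would hold trivially. In fact it is positive: $U(D,\cdot)$ is continuous on the ball and attains $U_{\rm opt}$ at some point. The open superlevel set $\{U>U_{\rm opt}-\tfrac t2\overline{\Delta U}\}$ is then a relatively open neighbourhood of that point within a convex body with nonempty interior, and hence has positive Lebesgue measure. The measurability of $\mathcal{B}_t$ and $\mathcal{B}_{t/2}^c$ follows from the continuity of $U$.

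For the restricted mechanism, the same computation applies verbatim in $\R^{\tilde p}$. I replace $U$ by $U_{\bm A}(D,\cdot)$, the ball by $B^{(\tilde p)}_{R,0}$, and $U_{\rm opt}$ by $U_{\bm A,\rm opt}$. I keep the same scale $\overline{\Delta U}$, since $\overline{\Delta U_{\bm A}}=\overline{\Delta U}$ as noted in Section~\ref{sec:projection}. Lebesgue measure is now taken in $\R^{\tilde p}$. The event in the statement should be read as $\xi\in\mathcal{B}_{t,\bm A}$, and with that reading the bound follows identically.
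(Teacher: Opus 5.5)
Your proof is correct and is the same argument the paper relies on: the paper gives no written proof and defers to the standard exponential-mechanism utility bound of Dwork--Roth (Theorem 3.11). You adapt that bound to a continuous domain by bounding the unnormalized mass of $\mathcal{B}_t$ by $|\mathcal{B}_t|\exp\bigl(\varepsilon U_{\rm opt}/(2\overline{\Delta U})-\varepsilon t/2\bigr)$ and the normalizer from below by restricting it to $\mathcal{B}_{t/2}^c$. Your additional points are sound details the paper leaves implicit: the positivity of $|\mathcal{B}_{t/2}^c|$, reading the restricted event as $\xi\in\mathcal{B}_{t,\bm A}$, and reusing the scale $\overline{\Delta U}$ in $\R^{\tilde p}$.
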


Proposition \ref{prop:accuracy} proves that our mechanism achieves near optimal utility score with high probability. However, for the restricted version, only the optimal score on the low-dimensional projected subspace can be reached. In \cref{prop:gap}, we establish a theoretical bound on the gap between the optimal utility scores in the full and restricted spaces. First, the following technical lemma is required.

\begin{lemma}\label{lem:beta_lower_tail}
Let $X\sim\mathrm{Beta}(\alpha,\beta)$.
For any $\eta\in(0,1)$, we have
\[
\mathbb{P}\!\left(X\le (1-\eta)\mathbb{E}[X]\right)\le \exp\!\left(-\frac{\alpha\eta^2}{4}\right).
\]
\end{lemma}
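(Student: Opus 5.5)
The plan is to use the gamma representation of the Beta law and a single Chernoff bound on a linear combination of independent gamma variables. There is no need to split the event into separate tail events for numerator and denominator: a union bound would lose a factor of $2$ and the clean constant $1/4$.

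\textbf{Reduction.} Write $X = G_a/(G_a+G_b)$ with $G_a\sim\mathrm{Gamma}(\alpha,1)$ and $G_b\sim\mathrm{Gamma}(\beta,1)$ independent. Set $m=\mathbb{E}[X]=\alpha/(\alpha+\beta)$ and $c=(1-\eta)m\in(0,1)$. The event $X\le c$ is then exactly
\[
Z:=(1-c)G_a-cG_b\le 0 .
\]
For $s\ge 0$ with $sc<1$, Markov's inequality applied to $e^{-sZ}$ and the gamma moment generating function give $\mathbb{P}(Z\le 0)\le e^{\Lambda(s)}$, where
\[
\Lambda(s)=-\alpha\log\bigl(1+s(1-c)\bigr)-\beta\log(1-sc).
\]

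\textbf{Taylor bound on $\Lambda$.} We have $\Lambda(0)=0$ and
\[
\Lambda'(0)=-\alpha(1-c)+\beta c=-\bigl(\alpha-c(\alpha+\beta)\bigr)=-\alpha\eta ,
\]
using $c(\alpha+\beta)=(1-\eta)\alpha$. The second derivative is
\[
\Lambda''(s)=\frac{\alpha(1-c)^2}{(1+s(1-c))^2}+\frac{\beta c^2}{(1-sc)^2}.
\]
I will take $s=\eta/2$. For $u\in[0,s]$ we have $uc\le 1/2$, so $(1-uc)^{-2}\le 4$, while the first denominator is at least $1$. Hence
\[
\sup_{[0,s]}\Lambda''\le \alpha(1-c)^2+4\beta c^2 .
\]
To express this in terms of $\alpha$ alone, use $c\le m$ and the identity $\beta m^2=\alpha\, m(1-m)$. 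This gives $4\beta c^2\le 4\alpha m(1-m)\le\alpha$. Together with $(1-c)^2\le 1$, we get $\sup\Lambda''\le 2\alpha$.

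\textbf{Conclusion.} Taylor's theorem with remainder now yields
\[
\Lambda(\eta/2)\le -\alpha\eta\cdot\frac{\eta}{2}+\frac12\Bigl(\frac{\eta}{2}\Bigr)^2\cdot 2\alpha=-\frac{\alpha\eta^2}{2}+\frac{\alpha\eta^2}{4}=-\frac{\alpha\eta^2}{4},
\]
which is the claimed bound. The main step to get right is the uniform control of $\Lambda''$ independently of $\beta$. The key observations are the identity $\beta m^2=\alpha m(1-m)\le\alpha/4$ and the choice $s=\eta/2$, which keeps $sc\le 1/2$. If this constant bookkeeping failed, the fallback would be to optimize $s$ exactly, or to choose a smaller $s$ and accept a worse constant.
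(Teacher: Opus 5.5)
Your proof is correct and follows the paper's route. Both use the same Gamma representation and the same Chernoff bound on $(1-c)G_a - cG_b$, reduce to $\Lambda(s)\le -\alpha\eta s+\alpha s^2$ via $\beta m^2=\alpha m(1-m)\le\alpha/4$, and take $s=\eta/2$. The only cosmetic difference is that you get the quadratic term from a Taylor remainder with $\sup\Lambda''$ on $[0,\eta/2]$. The paper instead uses the elementary inequalities $\log(1+x)\ge x-x^2/2$ and $-\log(1-x)\le x+2x^2$ on $(0,1/2)$, which produce literally the same bound.
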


\begin{proof}
Set $t=(1-\eta)\mathbb{E}[X]$ and use the standard representation of $X$ in terms of Gamma distributions
\[
X = \frac{U}{U+V},
\]
where $U \sim \Gamma(\alpha,1)$ and $V \sim \Gamma(\beta,1)$ are independent.

The event $X\le t$ is equivalent to $(1-t)U-tV\le 0$, or $e^{-((1-t)U-tV)} \geq 1$. Hence, for any $s>0$, using Markov's inequality and the generating function of the Beta distribution,
\begin{align*}
\mathbb{P}(X \leq t)
&=\, \mathbb{P}\left[ e^{-((1-t)U-tV)} \geq 1 \right]\\
&\leq \mathbb{E}\left[ e^{-s((1-t)U-tV)} \right]
\\
 &= \mathbb{E}\left[ e^{-s(1-t)U)}\right] \mathbb{E}\left[ e^{stV}\right]\\
 &=  (1+s(1-t))^{-\alpha}(1-s t)^{-\beta}
\end{align*}
Applying estimates $\log(1+x)\ge x-\frac{x^2}{2}$ for $x\ge 0$ and
$-\log(1-x)\le x+\frac{x^2}{1-x}\le x+2x^2$ for $x\in(0,1/2)$
yields
\begin{align*}
\log\mathbb{P}(X\le t)  &\le \alpha \left(\frac{(s-st)^2}{2} - (s-st)\right)  + \beta(st + 2(st)^2) 
\\
& = (\alpha+\beta)st - \alpha s +  \frac{\alpha(s-st)^2}{2} + 2 \beta (st)^2 
\end{align*}
Replacing back $t= \alpha(1-\eta)/(\alpha + \beta)$ gives
\begin{align*}
\log\mathbb{P}(X\le t) &
\le - \eta\alpha s + \left[\frac{\alpha}{2} + 2\beta\frac{\alpha^2}{(\alpha + \beta)^2}\right] s^2 \\
&\le - \eta\alpha s + \alpha s^2 
\end{align*}
which is minimized at $s = \eta/2$, therefore 
\begin{align*}
\log\mathbb{P}(X\le t) \le -\frac{\alpha \eta^2}{4}.
\end{align*}
\end{proof}

\begin{proposition}\label{prop:gap}
Let \( \bm{A} \in \mathbb{R}^{p \times \tilde{p}} \) be drawn uniformly at random from the Stiefel manifold $\mathbb{V}_{p,\tilde{p}} := \{\bm A\in \mathbb{R}^{p\times \tilde{p}} | \bm A^\top \bm A = \bm{I}_{\tilde{p}}\}$. Let $
{z} := \bm{H}_\lambda^{-1/2} {g}
$ and $\kappa$ be the condition number of $\bm H_{\lambda}$. Assume $\tau < \|z\|^2/2 $. If 
\[
\tilde{p} \ge \max\left\{ {2\kappa p}\left(1- \dfrac{2\tau}{\|z\|^2}\right), 32\log\frac{1}{\gamma}\right\},
\]
then for any fixed $D\in \mathcal{X}_N$, with probability at least \( 1 - \gamma \), 
\[
 U_{\mathrm{opt}} - U_{\bm A, \mathrm{opt}}  \le \tau.
\]
\end{proposition}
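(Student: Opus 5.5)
The plan is to compare both optimal utilities with closed-form expressions coming from the quadratic structure, reduce the gap to a lower-tail event for a Beta random variable, and then apply \cref{lem:beta_lower_tail}. Write $\bm H_\lambda=\bm H_\lambda(D)$ and $g=g(D)$.

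\textbf{Step 1 (upper bound on $U_{\rm opt}$).} Maximizing the concave quadratic $U(D,\cdot)$ over all of $\R^p$ rather than over $B^{(p)}_{R,\theta^*}$ can only increase the value. Completing the square as in \cref{sec:sampling} gives
\[
U_{\rm opt}\le \tfrac12 g^\top\bm H_\lambda^{-1}g=\tfrac12\|z\|^2 .
\]

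\textbf{Step 2 (restricted optimum).} $U_{\bm A}$ in \cref{eq:defUA} is a quadratic in $\xi$ with unconstrained maximizer $-\bm H_{\lambda,\bm A}^{-1}g_{\bm A}$. I will assume, as in the rule of thumb for choosing $R$ in \cref{sec:projection}, that $R$ is large enough for this maximizer to lie in $B^{(\tilde p)}_{R,0}$. Then
\[
U_{\bm A,\rm opt}=\tfrac12 g^\top\bm A(\bm A^\top\bm H_\lambda\bm A)^{-1}\bm A^\top g .
\]
Since $\bm A$ has orthonormal columns, $\bm A^\top\bm H_\lambda\bm A\preceq e_{\max}(\bm H_\lambda)\bm I$. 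Hence
\[
U_{\bm A,\rm opt}\ge \frac{|\bm A^\top g|^2}{2e_{\max}(\bm H_\lambda)}.
\]
Also $\|z\|^2=g^\top\bm H_\lambda^{-1}g\le |g|^2/e_{\min}(\bm H_\lambda)$, so $|g|^2\ge e_{\min}\|z\|^2$. Combining the two,
\[
U_{\bm A,\rm opt}\ge \frac{\|z\|^2}{2\kappa}\,X,\qquad X:=\frac{|\bm A^\top g|^2}{|g|^2}.
\]
Together with Step 1, the conclusion $U_{\rm opt}-U_{\bm A,\rm opt}\le\tau$ follows once
\[
X\ge \kappa\Big(1-\frac{2\tau}{\|z\|^2}\Big).
\]
This threshold is positive by the assumption $\tau<\|z\|^2/2$.

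\textbf{Step 3 (distribution of $X$ and the tail bound).} Because $\bm A$ is uniform on the Stiefel manifold, $X$ has the same law as the squared norm of the projection of a fixed unit vector onto a uniformly random $\tilde p$-dimensional subspace. Equivalently, it is the first $\tilde p$ coordinates of a uniform point on $S^{p-1}$. So $X\sim\mathrm{Beta}(\tilde p/2,(p-\tilde p)/2)$ with $\EXP[X]=\tilde p/p$; I would justify this via the Gaussian representation $\sum_{i\le\tilde p}G_i^2/\sum_{i\le p}G_i^2$. Apply \cref{lem:beta_lower_tail} with $\eta=1/2$ and $\alpha=\tilde p/2$:
\[
\Pr\big[X\le \tilde p/(2p)\big]\le e^{-\tilde p/32}\le\gamma,
\]
using $\tilde p\ge 32\log(1/\gamma)$. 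On the complementary event,
\[
X>\frac{\tilde p}{2p}\ge\kappa\Big(1-\frac{2\tau}{\|z\|^2}\Big)
\]
by the first condition on $\tilde p$. Step 2 then gives the claim with probability at least $1-\gamma$.

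\textbf{Main obstacle.} The delicate point is the ball constraint in Step 2. The argument needs the restricted optimum to be unconstrained, which is the assumption that $\mu_{\bm A}(D)\in B^{(\tilde p)}_{R,0}$. I would first try to verify this directly via the bound $|\bm H_{\lambda,\bm A}^{-1}g_{\bm A}|\le \overline g/\lambda$, in the spirit of \cref{eq:defRbar}, so that $R\ge\overline g/\lambda$ suffices. Otherwise I would state it as a standing hypothesis on $R$. The Beta identification in Step 3 is standard and routine.
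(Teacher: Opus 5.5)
Your proof is correct and shares the paper's architecture. Both compute closed-form optima, derive a deterministic bound $U_{\mathrm{opt}}-U_{\bm A,\mathrm{opt}}\le\frac{\|z\|^2}{2}(1-X/\kappa)$ with $X\sim\mathrm{Beta}(\tilde p/2,(p-\tilde p)/2)$, and then apply \cref{lem:beta_lower_tail}. However, your Step 2 differs from the paper exactly where it matters.

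The paper writes the gap as $\frac12(\|z\|^2-\|\Pi z\|^2)$, with $\Pi$ the orthogonal projector onto $\operatorname{Im}(\bm H_\lambda^{1/2}\bm A)$. It then asserts $\|\Pi z\|^2\ge\kappa^{-1}\|P_{\bm A}z\|^2$ and takes $X=\|P_{\bm A}z\|^2/\|z\|^2$. That pointwise inequality is false. Take $p=2$, $\tilde p=1$, $\bm H_\lambda=\mathrm{diag}(1,\kappa)$, $\bm A=(1,1)^\top/\sqrt2$ and $z=(\sqrt\kappa,-1)^\top$. Then $g=\sqrt\kappa(1,-1)^\top\perp\bm A$, so $\Pi z=0$ and $U_{\bm A,\mathrm{opt}}=0$, while $\|P_{\bm A}z\|^2=(\sqrt\kappa-1)^2/2>0$.

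Your bound uses $\bm A^\top\bm H_\lambda\bm A\preceq e_{\max}\bm I$ and $|g|^2\ge e_{\min}\|z\|^2$ with $X=|\bm A^\top g|^2/|g|^2$, and it is valid. The Beta law only requires projecting some fixed unit vector onto a Haar-random $\tilde p$-dimensional subspace. So using $g/|g|$ instead of $z/\|z\|$ costs nothing in the constants. Your Step 2 is in effect the repaired version of the paper's key inequality, not merely an alternative to it.

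\textbf{Ball constraint.} The paper silently treats both $U_{\mathrm{opt}}$ and $U_{\bm A,\mathrm{opt}}$ as unconstrained maxima. You need less: relaxing the ball correctly gives an upper bound on $U_{\mathrm{opt}}$, so only the restricted maximizer must lie in $B^{(\tilde p)}_{R,0}$. This is strictly weaker than what the paper implicitly assumes. Your check of this condition can be sharpened. Since $\bm A^\top\bm H_\lambda\bm A\succeq(\lambda+e_{\min}(\bm H))\bm I$ and $|\bm A^\top g|\le|g|\le\overline g$, that maximizer has norm at most $\overline{R_\lambda}$ from \cref{eq:defRbar}. Hence the paper's own rule of thumb $R\ge\overline{R_\lambda}$ suffices, and this also covers $\lambda=0$ when $\bm H$ is nonsingular, where your $\overline g/\lambda$ degenerates.

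\textbf{Choice of $\eta$.} You fix $\eta=1/2$ and use the inclusion $\{X\le\kappa(1-2\tau/\|z\|^2)\}\subseteq\{X\le\tilde p/(2p)\}$. The paper instead uses the parameter-dependent $\eta\in[1/2,1)$ and then bounds $\eta^2\ge 1/4$. The two are equivalent; yours is slightly cleaner.
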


\begin{proof} The utility function in the full space is given by
\[
U(D,\theta) \;=\; - ( \theta - \theta^*)^\top g - \tfrac{1}{2}\,( \theta - \theta^*)^\top \bm H_\lambda \,( \theta - \theta^*) .
\]
The optimal solution is achieved at
$
{\theta}_{\mathrm{opt}} = {\theta}^* -  \bm{H}_{\lambda}^{-1} {g}
$ with the optimal utility score being
$
U_{\mathrm{opt}} = \frac{1}{2} {g}^\top \bm{H}_{\lambda}^{-1} {g}, 
$

In the projected space, the restricted utility is
\[
U_{\bm{A}}(D, {\xi}) = -{\xi}^\top{g}_{\bm A}   - \tfrac{1}{2} {\xi}^\top  \bm {H}_{\lambda,\bm A}  {\xi}.
\]
The optimal \( {\xi}_{\mathrm{opt}} \) satisfies
\[
{\xi}_{\mathrm{opt}} = \bm{H}_{\lambda,\bm{A}}^{-1} {g}_{\bm{A}}, \quad
U_{\bm A, \mathrm{opt}} = \frac{1}{2} {g}_{\bm{A}}^\top \bm{H}_{\lambda, \bm{A}}^{-1} {g}_{\bm{A}}.
\]
The utility gap between the full and restricted spaces is thus
\begin{align*}
U_{\mathrm{opt}} - U_{\bm A, \mathrm{opt}} 
& = \frac{1}{2} {g}^\top \bm{H}_\lambda^{-1} {g} 
- \frac{1}{2} {g}^\top \bm{A} ( \bm{A}^\top \bm{H}_\lambda \bm{A} )^{-1} \bm{A}^\top {g}
\\
 & = \frac{1}{2} {g}^\top ( \bm{H}_\lambda^{-1} - \bm{A} ( \bm{A}^\top \bm{H}_\lambda \bm{A} )^{-1} \bm{A}^\top) {g}. 
\end{align*}

Let $P_{\bm A} := \bm A \bm A^\top$ denote the orthogonal projection onto the subspace $\operatorname{Im}(\bm A)\subset \mathbb{R}^p$. Define the orthogonal projection operator \( \Pi \) onto the subspace \( \operatorname{Im}(\bm{H}_\lambda^{1/2} \bm{A}) \subset \mathbb{R}^p \) as
\[
\Pi := \bm{H}_\lambda^{1/2} \bm{A} \left( \bm{A}^\top \bm{H}_\lambda \bm{A} \right)^{-1} \bm{A}^\top \bm{H}_\lambda^{1/2}.
\]
Then,
\begin{align*}
& U_{\mathrm{opt}} - U_{\bm A, \mathrm{opt}} 
 = \frac{1}{2} \left( \| {z} \|^2 - \| \Pi {z} \|^2 \right) 
\\
 \le\ & \frac{1}{2} \left( \| {z} \|^2 - \frac{1}{\kappa}\| P_{\bm A} {z} \|^2 \right) = \frac{\| z\|^2}{2}\bigl(1-\frac{\rho}{\kappa}\bigr)
\end{align*}
{with} $ X := \| P_{\bm A} {w} \|^2 \ \text{and}\   w :=  z/\| z\|  $. We have 
\begin{equation*}
\mathbb{P}\!\left(U_{\mathrm{opt}}-U_{\bm A,\mathrm{opt}}\le \tau\right)
\ge
 1-\mathbb{P}\!\left(X \le \kappa( 1-\frac{2\tau}{\|z\|^2})\right).
\end{equation*}
Since \( \bm{A} \) is sampled uniformly from the Stiefel manifold \( \mathbb{V}_{p,\tilde{p}} \), it can be shown that 
\[
X \;\sim\;  \operatorname{Beta}\left( \dfrac{\tilde{p}}{2}, \dfrac{p - \tilde{p}}{2} \right) .
\]
Set 
$ \eta := 1-\dfrac{\kappa p}{\tilde p}\left(1-\dfrac{2\tau}{\|z\|^2}\right)$, we have $\eta \in (1/2,1)$ due to the conditions of $\tilde{p}$ and $\tau$. Applying Lemma \ref{lem:beta_lower_tail} gives
\begin{align*}
\mathbb{P}\!\left(X \le \kappa \left( 1-\frac{2\tau}{\|z\|^2}\right)\right) &= 
\mathbb{P}\!\left(X\le (1-\eta)\frac{\tilde{p}}{p}\right) 
\\
& = \mathbb{P}\!\left(X\le (1-\eta)\mathbb{E}[X]\right)\\
&\le\;\exp\!\left(-\frac{\tilde p\,\eta^2}{8}\right) \\
&\le\;\exp\!\left(-\frac{\tilde p}{32}\right). 
\end{align*}
For $ \mathbb{P}\!\left(U_{\mathrm{opt}}-U_{\bm A,\mathrm{opt}}\le \tau\right) \;\ge\;1 - \gamma$, it suffices to choose $\tilde p$ such that
\[
\exp\!\left(-\frac{\tilde p}{32}\right)\le \gamma,
\quad\text{i.e.}\quad
\tilde p \ge {32}\log\frac{1}{\gamma}.
\]
\end{proof}
This proposition shows that restricting the sampling on a random $\tilde{p}$-dimensional subspace preserves the optimal utility up to an additive error $\tau$ with high probability. The ratio $\tilde{p}/p$ decreases when larger $\tau$ is allowed and vice versa. 

\section{Numerical experiments}
\label{sec:experiment}
We evaluate ExpM-Quad and compare it to DP-SGD. The pretrained model and non-private SGD results will be included for reference. We implement DP-SGD using Opacus library \cite{opacus}. For ExpM-Quad, we employ the restricted version \eqref{eq:distMUA} with various choices for subspace dimension and demonstrate that the original dimension can be significantly reduced with random projection strategy without sacrificing the performance of the algorithm. We also investigate the sensitivity of ExpM-Quad to the choice of $R$, and show how this parameter affects the performance. Gibbs sampling is employed to sample for the mechanism in all the tests. 

\subsection{Sinusoidal regression}
\label{sec:exp:sin_regression}
We start with a simple {sinusoidal regression} problem. 
The target function $\varphi\colon\R^5\to\R$ is of the form 
\[
\varphi(x) = a\,\sin(2\pi x_1) + b\,x_2 + c,
\]
where the inputs $x \in\R^5$, and only the first two coordinates carry signal. The function thus has a low-dimensional structure embedded in a higher-dimensional space. 
The pretraining and fine-tuning datasets both consider inputs $x \sim \mathcal{N}(0, I_5)$ but with target functions differ in amplitude $a$, slope $b$ and bias $c$, thereby introducing a mild {shift}. 
In particular, the pretraining data are generated with parameters 
$(a, b, c) = (1.0, 0.3, 0.25)$, 
while the fine-tuning data use 
$(a, b, c) = (0.9, 0.35, 0.25)$. 
To introduce additional domain shift between the pretraining and fine-tuning stages, we apply an affine transformation to the inputs. Instead of directly using the original inputs $x$, the function in fine-tuning stage is defined on the shifted and rescaled inputs 
\[
x_i \leftarrow 1.1(x_i + 0.1), \qquad i = 1,2.
\]
This synthetic setting, adapted from the sine regression test used in meta-learning~\cite{finn2017maml}, provides an analytically tractable testbed for examining the behavior of ExpM-Quad. 

For model architecture, we consider a Multilayer Perceptron (MLP) with two hidden layers of $10$ nodes each with ReLU activation functions, consisting of $p=181$ parameters in total. This MLP model $f(x,\theta)$ is pretrained on a dataset $D_0$ with $5000$ samples using mean squared error (MSE) loss. 
Pretraining is carried out for 2000 epochs with learning rate $10^{-2}$. 
The fine-tuning dataset $D$ is also of size $5000$ and used for testing differentially private fine-tuning with ExpM-Quad and the baselines. The final MSE loss of the pretrained MLP model on the pretraining dataset is $0.084$. When the pretrained model is applied to the fine-tuning dataset, the (zero-shot) loss value is $0.163$. 

For ExpM-Quad, we note that the sensitivity bounds \eqref{eq:sens_MS}-\eqref{eq:sens_CE} require suprema of residual and Jacobian spectral norms over the entire data distribution, whereas in practice we often only have access to empirical maxima over the observed dataset $D$. Therefore, we apply a conservative inflation factor of $1.1$ to the empirical bounds before computing sensitivity to account for unseen inputs beyond $D$. No regularization is applied to the quadratic loss (i.e., $\lambda=0$). The feasible parameter domain is the Euclidean ball
$B^{(p)}_{R,\theta^*}$. Sampling is carried out in a randomly chosen $\tilde{p}$-dimensional subspace of the
full parameter space $\R^p$ with various projection dimension $\tilde{p}$ and radius $R$. 
The performance of ExpM-Quad is evaluated and compared to DP-SGD over a range of privacy budgets
\[
\varepsilon \in \{0.1,\,1,\,2,\,5,\,10,\,50\}.
\] 

Figure~\ref{fig:mlp_sinreg} shows the fine-tuning performance of ExpM-Quad on the sinusoidal regression with fixed projected dimension $\tilde{p}=20$ and varied radius $R$. The best performance of ExpM-Quad is achieved at moderate $R$ ($0.1, 0.15$) where the method yields lower loss, passing the zero-shot loss value at $\varepsilon =1$ and approaching the non-private SGD baseline as $\varepsilon$ increases. On the other hand, too small a radius ($R=0.01$) over-restricts the feasible set, likely excluding the low-loss region and causing the loss to saturate at a suboptimal level regardless of $\varepsilon$. Too large a radius ($R=0.50$) inflates the sensitivity bound, requiring the mechanism to inject substantially more noise, which results in higher loss and large variance (shaded regions in \cref{fig:mlp_sinreg}), particularly when $\varepsilon$ is small. 
DP-SGD, in contrast, remains flat despite having a good start at $\varepsilon=0.1$.
We attribute this to the flatness of the loss landscape near $\theta^*$: the gradient is either too small or dominated by noise, even with small noise added (i.e., in large $\varepsilon$ case) to make effective steps. ExpM-Quad avoids this limitation because it directly samples from a distribution concentrated around the minimizer of the fine-tuning objective, without relying on iterative steps to navigate the loss landscape. This difference explains why ExpM-Quad exhibits a clear improvement as $\varepsilon$ grows while DP-SGD saturates.

\begin{figure}[!h]
\centering
\includegraphics[width = 0.45\textwidth]{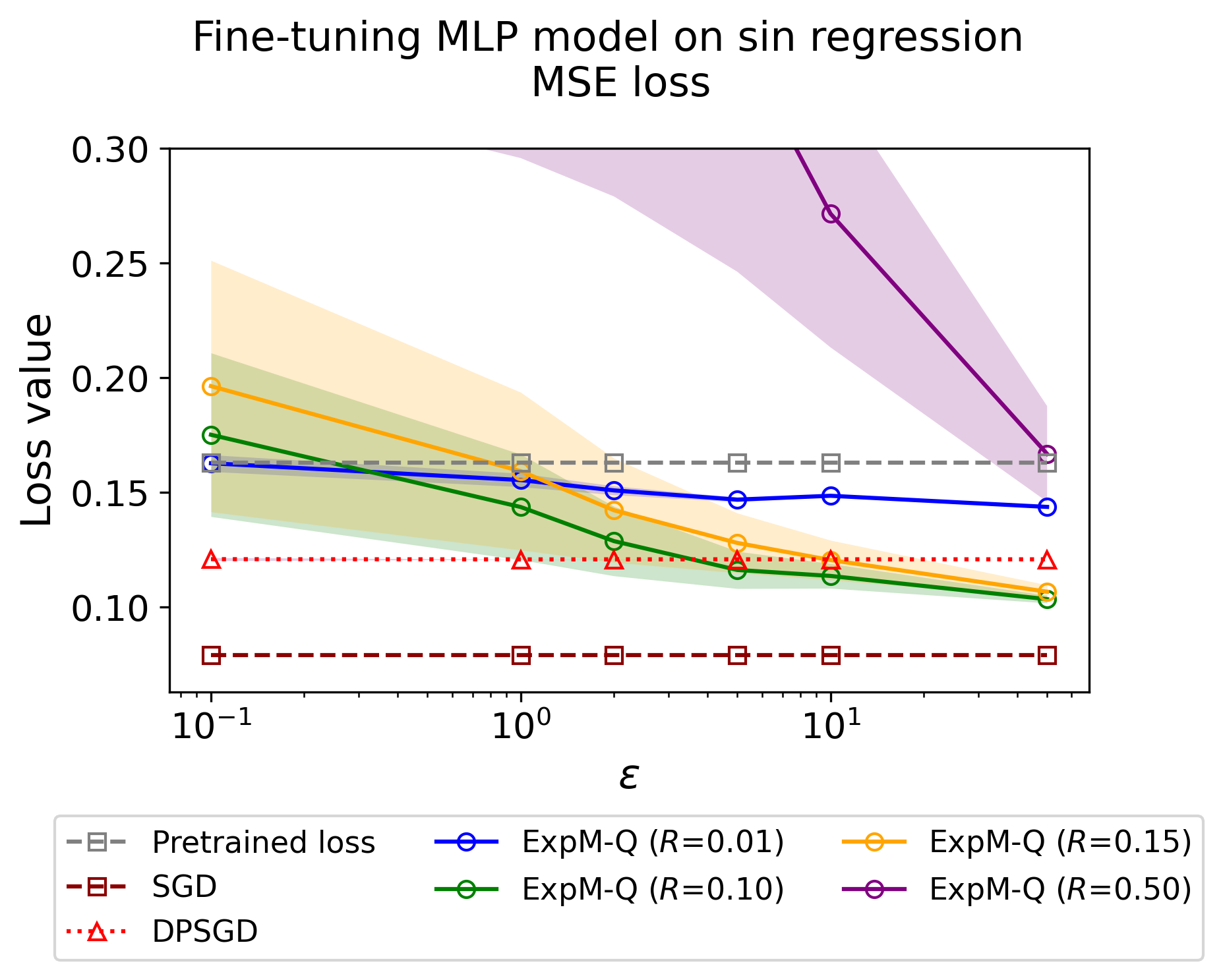}
\vspace{-.1in}
\caption{Fine-tuning an MLP on the sinusoidal regression problem
    with varied privacy budget $\varepsilon$. The ExpM-Quad projected dimension is
    fixed at $\tilde{p}=20$. For ExpM-Quad, each curve shows 
the mean loss value over 500 randomly drawn candidate models; shaded regions denote $\pm 1$ standard deviation.}
\label{fig:mlp_sinreg}
\end{figure}

Figure~\ref{fig:mlp_sinreg2} shows fine-tuning performance of ExpM-Quad 
with a fixed radius ($R=0.1$) and varied projected dimension $\tilde{p}$. 
All variants improve as $\varepsilon$ increases and converge toward 
the non-private SGD baseline at large $\varepsilon$. As expected, larger 
projected dimensions ($\tilde{p}=20, 40$) outperform smaller ones 
($\tilde{p}=5, 10$), since a higher-dimensional subspace better 
approximates the utility optimizer and imposes less 
restriction on the parameter sampling. This comes, however, at the cost 
of more expensive Gibbs or rejection sampling and Hessian computation. Notably, the 
performance gap is small and diminishes beyond $\tilde{p}=20$, despite the full parameter space having dimension $p=181$, suggesting that the fine-tuning objective is well captured in random low-dimensional subspaces near $\theta^*$.  This demonstrates
the efficiency of the subspace projection strategy in ExpM-Quad.

\begin{figure}[!h]
\centering
\includegraphics[width = 0.45\textwidth]{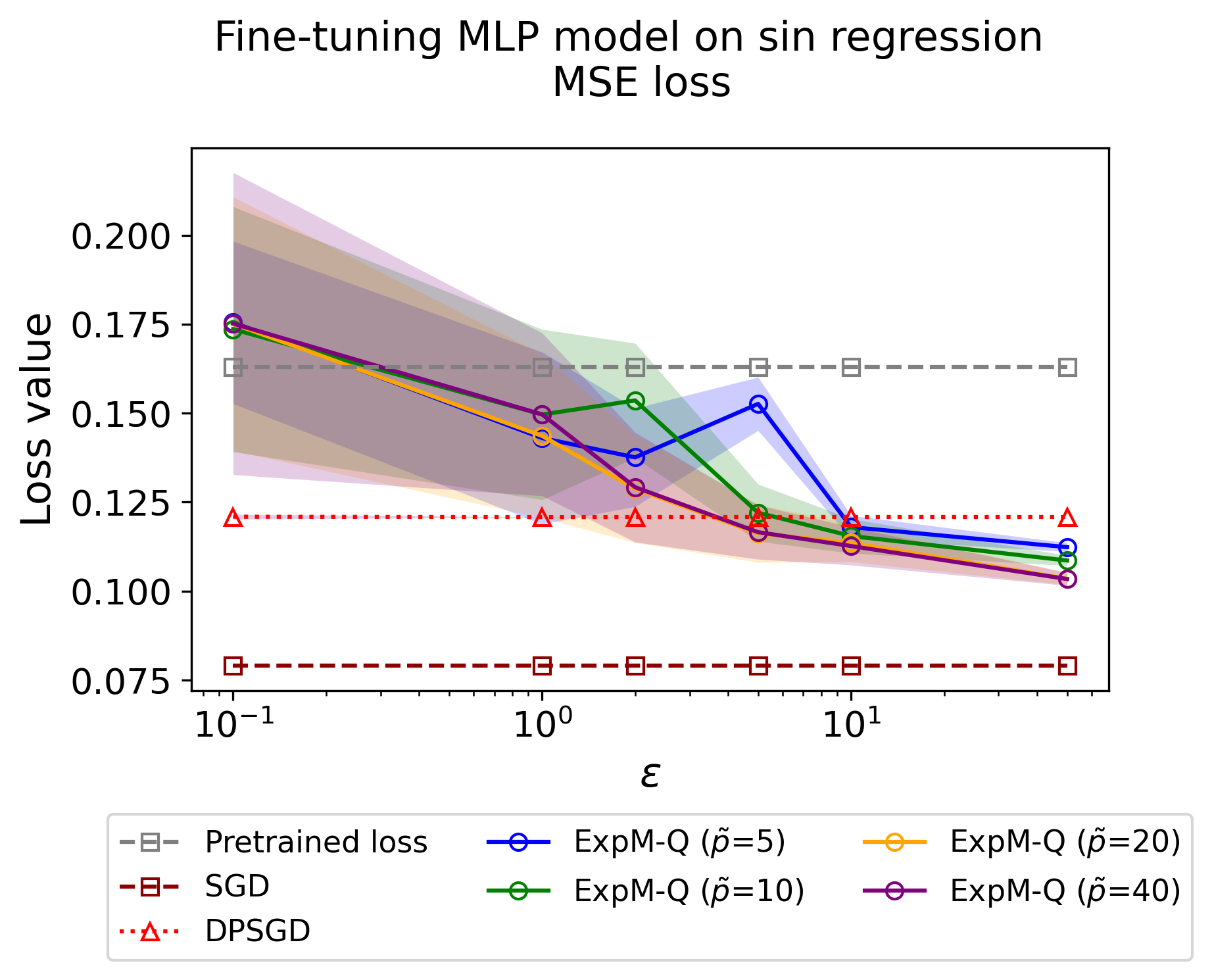}
\vspace{-.1in}
\caption{Fine-tuning an MLP on the sinusoidal regression problem
    with varied privacy budget $\varepsilon$. The ExpM-Quad feasible radius is 
    fixed at $R=0.1$. For ExpM-Quad, each curve shows 
the mean loss value over 500 randomly drawn candidate models; shaded regions denote $\pm 1$ standard deviation.}
\label{fig:mlp_sinreg2}
\end{figure} 

\subsection{MNIST classification}
\label{sec:exp:MNIST}

Next, we test a multiclass image classification task on the MNIST dataset \cite{lecun1998gradient}. 
All images are normalized in a standard manner with mean
$0.1307$ and standard deviation $0.3081$. To induce a distribution
shift between pretraining and fine-tuning datasets, we construct both clean and noisy
versions of the MNIST dataset. The noisy variant applies additive Gaussian noise with
zero mean and standard deviation $0.5$ prior to normalization. 

We consider a convolutional neural network (CNN) model, consisting of two convolutional blocks with ReLU activations and max-pooling, followed by an MLP layer producing 10-dimensional logits. This model has a total of $6{,}722$ parameters. The CNN model is pretrained on a clean subset of $20{,}000$ MNIST datapoints and subsequently fine-tuned on a noisy MNIST subset of $20{,}000$ datapoints. The SGD training takes $100$ epochs with batch size $1024$, learning rate $10^{-3}$, and weight decay $10^{-2}$. The MSE is used as the loss. Evaluation is performed on the corresponding clean and noisy test sets of $10{,}000$ samples. On the clean test set, the accuracy of pretrained model is $95.29\%$, which drops to $80.30\%$ when applied to the noisy fine-tuning set. For ExpM-Quad, we apply a regularization parameter $\lambda = 1.0$. Inflation factor of $1.1$ is applied to the empirical bounds of the residual and Jacobian norms.

In Figure~\ref{fig:mnist_cnn}, we show the fine-tuning performance of ExpM-Quad with fixed projected dimension ($\tilde{p}=400$) and varied radius $R$. Similar to the sinusoidal regression test, the best performance of ExpM-Quad is achieved at moderate $R$ ($0.1, 0.15$), where it surpasses the pretrained baseline at $\varepsilon=1$ and approaches the non-private 
SGD baseline at large $\varepsilon$. Small radius ($R=0.01$) again saturates at a suboptimal accuracy while large radius ($R=0.50$) suffers from stronger injected noise and larger sampling domain. 
DP-SGD outperforms ExpM-Quad for small $\varepsilon$ but remains nearly flat after 
$\varepsilon=1$, whereas ExpM-Quad exhibits clear monotone improvement and overtakes DP-SGD at large $\varepsilon$. 
For reference, we include the non-private SGD baseline with CE loss, which is more standard for classification. 
As expected, it outperforms SGD with MSE loss. 
ExpM-Quad is not competitive with CE loss in this test, probably because the quadratic approximation 
of the loss landscape is less accurate for CE than for MSE. However, ExpM-Quad with MSE loss still achieves comparable accuracy, demonstrating its effectiveness even when the loss function is not the standard choice to the task.
 
\begin{figure}[!h]
\centering
\includegraphics[width = 0.45\textwidth]{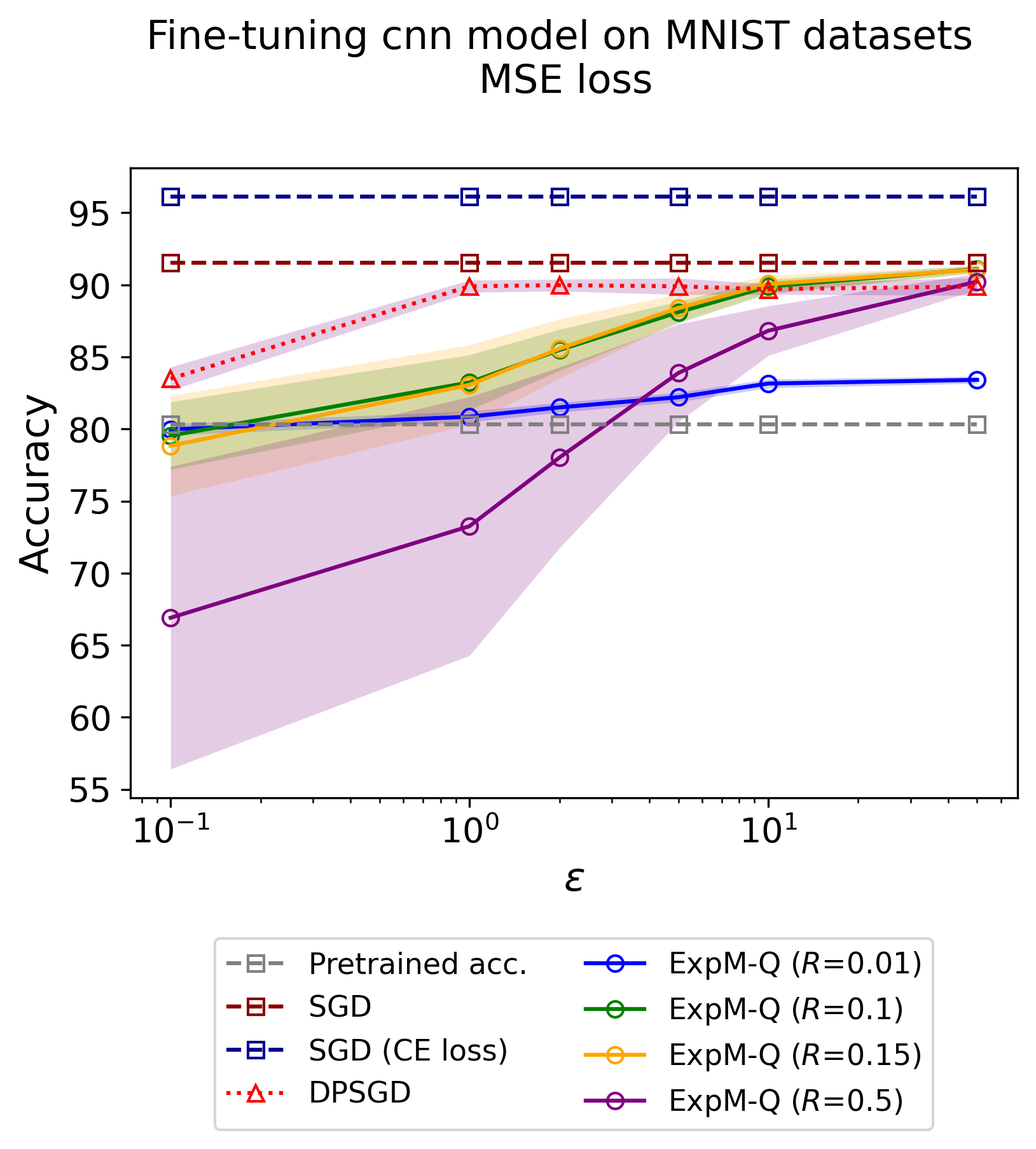}
\vspace{-.1in}
\caption{Fine-tuning a CNN model on MNIST classification with varied privacy budget $\varepsilon$. The ExpM-Quad projected dimension is fixed at $\tilde{p}=400$. For ExpM-Quad, each curve shows 
the mean accuracy over 500 randomly drawn candidate models; shaded regions denote $\pm 1$ standard deviation.}
\label{fig:mnist_cnn}
\end{figure} 

In Figure~\ref{fig:mnist_cnn2}, we show the fine-tuning performance of ExpM-Quad with fixed radius $R=0.1$ and varied subspace dimension $\tilde{p}$. Here, all variants perform comparably at small $\varepsilon$, but larger projected dimensions ($\tilde{p}=200, 400$) outperform smaller ones ($\tilde{p}=50, 100$) at $\varepsilon \geq 5$. At $\tilde{p}=400$, 
ExpM-Quad reaches the non-private SGD baseline, suggesting that a $400$ dimensional subspace is sufficient to capture the full space landscape near the pretrained $\theta^*$.

\begin{figure}[!h]
\centering
\includegraphics[width = 0.45\textwidth]{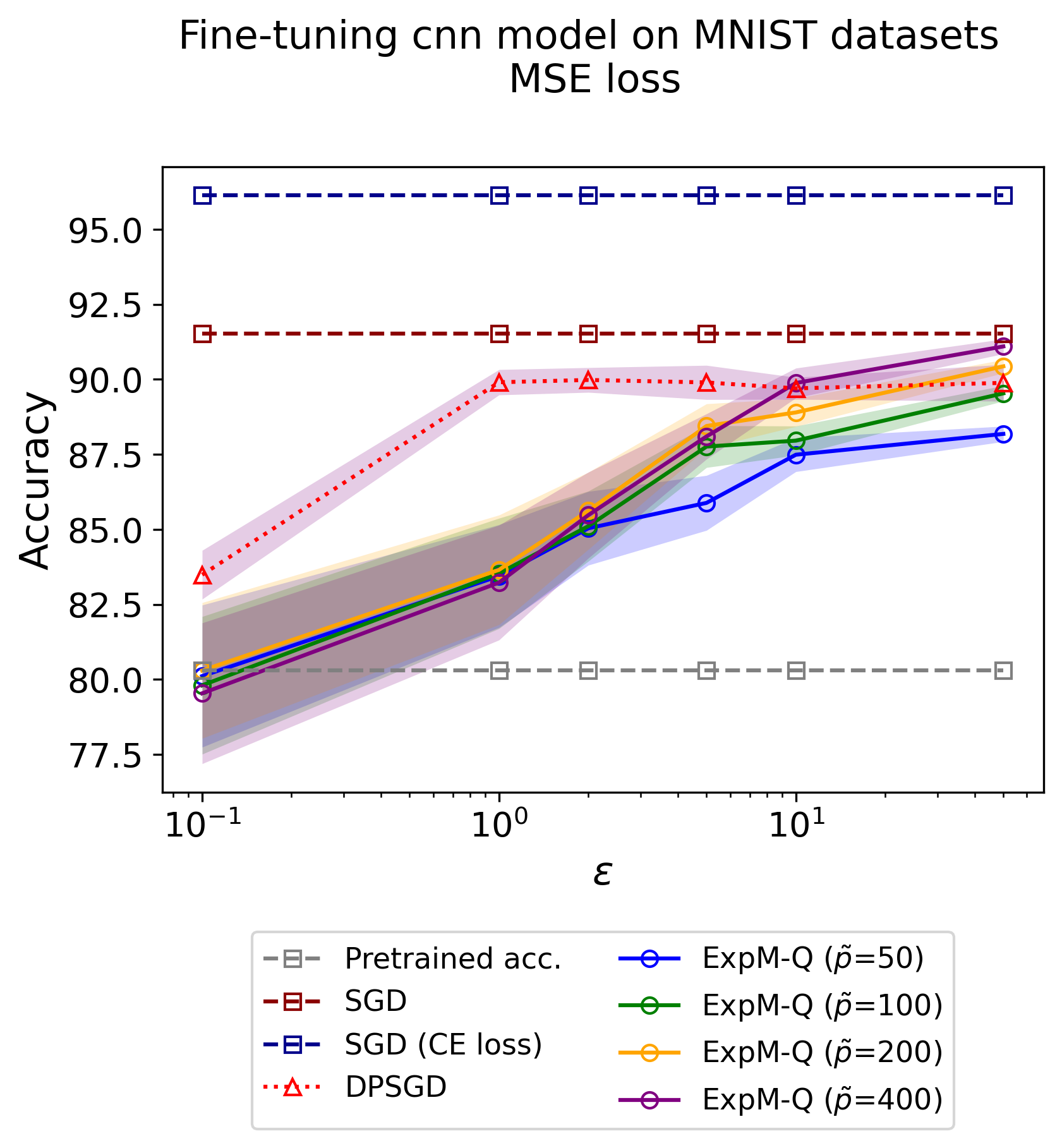}
\vspace{-.1in}
\caption{Fine-tuning a CNN model on MNIST classification with varied privacy budget $\varepsilon$. The ExpM-Quad feasible radius is 
    fixed at $R=0.1$. For ExpM-Quad, each curve shows 
the mean accuracy over 500 randomly drawn candidate models; shaded regions denote $\pm 1$ standard deviation.}
\label{fig:mnist_cnn2}
\end{figure}

\subsection{MIMIC-IV Mortality Prediction}
\label{sec:exp:MIMIC}

We consider a binary classification task in which the objective is to predict patient mortality from benchmark clinical datasets. The machine learning model is first pretrained on the eICU Collaborative Research Database \cite{eicu} and subsequently fine-tuned on MIMIC-IV \cite{mimic-iv}. We use the first 48 hours ICU admission window for the mortality prediction task. The eICU and MIMIC-IV datasets are harmonized using the BlendedICU preprocessing library \cite{Oliver2023Introducing}, then the MIMIC-IV-Data-Pipeline \cite{gupta2022extensive} is used to convert the harmonized data into a format suitable for machine learning training.
The eICU dataset comprises $183{,}631$ adult ICU stays collected from more than $200$ hospitals across the United States, of which $173{,}605$ are labeled as non-mortal and $10{,}026$ as mortal. Its multi-center nature introduces high variability in patient populations, clinical practice patterns, and hospital environments. The MIMIC-IV dataset contains $69{,}964$ adult ICU stays from a single academic medical center, including $7{,}777$ mortal cases. Although smaller in size, MIMIC-IV offers a more coherent and high-fidelity clinical setting, making it well suited for domain-specific fine-tuning and evaluation.

Both datasets exhibit severe class imbalance. To ensure effective evaluation using accuracy metric, we conduct experiments on balanced subsets of each dataset. Specifically, we pretrain the model on a subset of $15{,}000$ eICU cases, with $50\%$ mortal and $50\%$ non-mortal samples, and fine-tune on a subset of $15{,}000$ MIMIC-IV cases, similarly balanced across outcome classes. The original input features consist of a $2{,}403$-dimensional sparse vector encoding diagnosis, procedure, and medication codes. To reduce input dimensionality, we apply random forest–based feature selection on the eICU training data to identify the most important features for mortality prediction. 200 top-ranked features are retained and used during pretraining and fine-tuning.

We consider the linear model $f(x) = Wx + b$, pretrained for $1{,}000$ epochs with a learning rate $0.05$, weight decay $0.1$, and MSE loss. In Figure~\ref{fig:eicu_linear}, with fixed projected dimension ($\tilde{p}=20$), moderate $R$ ($0.1$) provides the best performance of ExpM-Quad, surpassing the pretrained accuracy at $\varepsilon \ge 5$. DP-SGD is probably guided by very small or noisy update steps and remains flat across all $\varepsilon$. Figure \ref{fig:eicu_linear2} shows that random projection onto $20$ or $40$ dimensional subspaces is sufficient, as the performance gap of ExpM-Quad between $\tilde{p}=20$ and $\tilde{p}=200$ is minimal.    

\begin{figure}[!h]
\centering
\includegraphics[width = 0.45\textwidth]{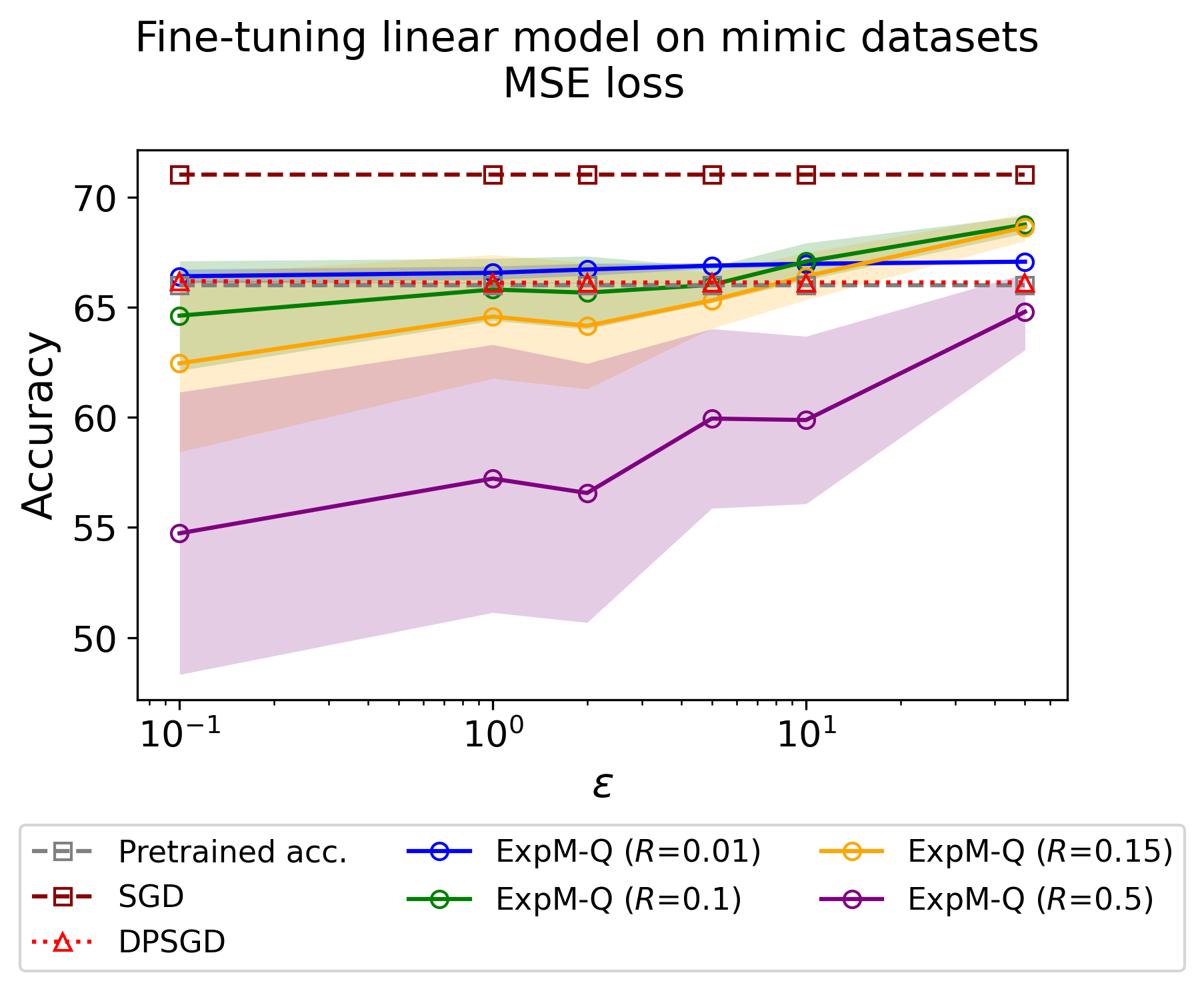}
\vspace{-.1in}
\caption{Fine-tuning a linear model on MIMIC-IV mortality prediction with varied privacy budget $\varepsilon$. The ExpM-Quad projected dimension is
    fixed at $\tilde{p}=20$.  }
\label{fig:eicu_linear}
\end{figure} 

\begin{figure}[!h]
\centering
\includegraphics[width = 0.45\textwidth]{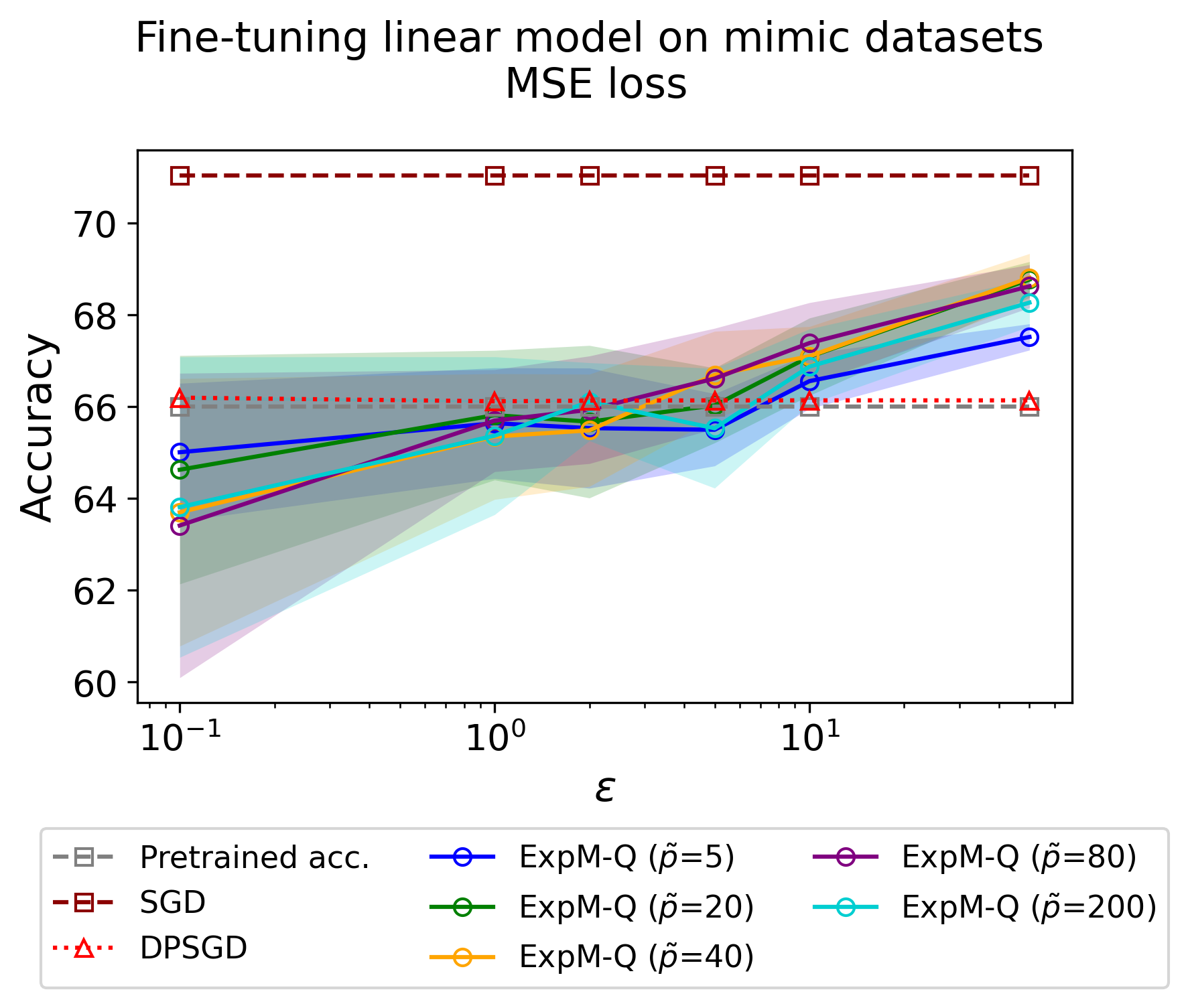}
\vspace{-.1in}
\caption{Fine-tuning a linear model on MIMIC-IV mortality prediction with varied privacy budget $\varepsilon$. The ExpM-Quad feasible radius is 
    fixed at $R=0.1$ }
\label{fig:eicu_linear2}
\end{figure} 

\section{Conclusions}
\label{sec:conclusion}

We introduce ExpM-Quad, a differentially private fine-tuning method 
based on the exponential mechanism that exploits quadratic approximation of the 
fine-tuning loss function near the pretrained parameters. By constructing a 
quadratic loss/utility function, ExpM-Quad allows for both rigorous sensitivity 
analysis and tractable sampling via a truncated Gaussian distribution, 
overcoming two fundamental obstacles that had limited the practical 
adoption of exponential mechanisms. 
To scale the approach to high-dimensional models, we propose a random projection strategy that restricts sampling to a low-dimensional subspace. We prove that this projection preserves near-optimal utility 
with high probability and verify numerically that random subspaces of dimension orders 
of magnitude smaller than the full parameter space suffice to capture 
the essential loss landscape near the pretrained parameters.

In three benchmark experiments -- sinusoidal regression, MNIST 
classification, and MIMIC-IV mortality prediction -- ExpM-Quad achieves 
clear improvement in performance as the privacy budget 
$\varepsilon$ grows, approaching the non-private SGD baseline at large 
$\varepsilon$. A comparison with DP-SGD highlights two advantages 
of our method. First, while 
ExpM-Quad requires an overhead cost for computing Hessian information, once it is done, a single sampling chain can quickly produce hundreds of candidate models, while DP-SGD yields only one model per run. Second, 
DP-SGD is path-dependent and can stagnate in flat loss regions where 
gradient steps are too small or too noisy to make meaningful progress. 
ExpM-Quad does not face this limitation because it directly samples from a 
distribution concentrated around the minimizer of the approximate loss function, without relying on iterative updates to navigate the loss landscape.

Several directions remain for future work. First, the current approach 
uses a randomly chosen projection subspace; employing data-driven selection of subspaces, such as using the top eigenvectors of the Hessian on a pretraining or reference dataset, may further improve performance. Second, a more thorough theoretical study on the quality of quadratic approximation, as well as the interplay 
between the feasibility radius and 
the fine-tuning performance would provide better understanding of when ExpM-Quad is most effective. 
Third, scaling ExpM-Quad to 
more advanced models with millions of 
parameters remains an important challenge, particularly given the cost of Hessian 
computation at scale, which warrants 
further investigation. 

\section*{Acknowledgments}
The authors thank Dr.~Heidi Hanson for acquiring funding supporting this work.  
The authors thank Dr.~Heidi Hanson and Dr.~John Gounley for the discussions on defining the overarching goal of this work and on the formulation of the MIMIC-IV mortality prediction experiment. 
This material is based upon work supported by the U.S.~Department of Energy, Office of Science, Office of Advanced Scientific Computing Research under Award Number DE-SC-ERKJ422.


\bibliographystyle{IEEEtran}
\bibliography{reference}

\begin{IEEEbiographynophoto}
{Hoang A. Tran} received the M.S. degree in Mathematics from the Université d’Orléans, Orléans, France, in 2008, and the Ph.D. degree in Applied Mathematics from the University of Pittsburgh, Pittsburgh,
PA, USA, in 2013. He is currently a mathematician with Data Analysis and Machine Learning Group, Computer Science and Mathematics Division, Oak
Ridge National Laboratory, Oak Ridge, TN, USA.
His research interests include compressed sensing, machine learning, high-dimensional approximations and numerical solution of partial differential equations.
\end{IEEEbiographynophoto}

\begin{IEEEbiographynophoto}
{Jorge Ramirez} is a Colombian applied mathematician, educator, and researcher. He earned his M.Sc. and Ph.D. degrees in Mathematics from Oregon State University and was a Richard S. Pierce Fellow in Mathematics at The University of Arizona. From 2022 to 2026, he served as a research staff member in the Systems and Decision Sciences Group at Oak Ridge National Laboratory. He is currently an Associate Professor in the Department of Mathematics at Universidad Nacional de Colombia. His research focuses on uncertainty quantification, probability, and stochastic processes, with applications to the natural sciences and computation.  
\end{IEEEbiographynophoto}

\begin{IEEEbiographynophoto}
{Jiayi Wang} received the B.E. degree in electrical engineering from the University of Electronic Science and Technology of China in 2017 and the Ph.D. degree in electrical engineering from the University of Utah in 2024. She is currently a Postdoctoral Researcher at Oak Ridge National Laboratory. Her research interests include federated learning, differential privacy, synthetic data generation, and distributed optimization. 
\end{IEEEbiographynophoto}

\begin{IEEEbiographynophoto}
{Alberto Bocchinfuso} received his BS in Computer Engineering from University of Calabria (Italy) in 2016, his MS in Automation and Control Engineering from Politecnico di Milano (Italy) in 2019 and his PhD in Applied Mathematics from Case Western Reserve University, in Cleveland, OH, in 2023.
He served as a Postdoctoral Research Associate in Oak Ridge National Laboratory before joining CINECA (Bologna, Italy) where he currently serves as HPC Specialist.
His research interest include inverse problems, data science, mathematical modeling and machine learning.
\end{IEEEbiographynophoto}

\begin{IEEEbiographynophoto}
{Chris Stanley} received his Ph.D. degree in Polymer Science and Engineering from the University of Massachusetts, Amherst, MA, USA, in 2004. He is a Staff Research Scientist in the Advanced Computing for Health Sciences section, Computational Sciences and Engineering Division, Oak Ridge National Laboratory, Oak Ridge, TN, USA. His research interests include machine learning for scientific and clinical applications.
\end{IEEEbiographynophoto}

\begin{IEEEbiographynophoto}
{M.~Paul Laiu} received his Ph.D. degree in Electrical and Computer Engineering from University of Maryland College Park, MD, USA, in 2016.
He is a Staff Mathematician in the Multiscale Methods and Dynamics Group at Computer Science and Mathematics Division, Oak Ridge National Laboratory, Oak Ridge, TN, USA. 
His research interest includes scientific machine learning, surrogate modeling, iterative solvers, and numerical schemes for various partial differential equations, with focuses on the design, development, and analysis of mathematical tools that accelerate the simulation and learning of multiscale systems.

\end{IEEEbiographynophoto}

\vfill\pagebreak

\end{document}